\documentclass[11pt]{article}

\usepackage[final]{acl}

\usepackage{times}
\usepackage{latexsym}
\usepackage[T1]{fontenc}
\usepackage[utf8]{inputenc}
\usepackage{microtype}
\usepackage{inconsolata}
\usepackage{graphicx}
\usepackage{amsmath}
\usepackage{amssymb}
\usepackage{amsthm}
\usepackage{booktabs}
\usepackage{multirow}
\usepackage{array}
\usepackage{tabularx}
\usepackage{xspace}
\usepackage{algorithm}
\usepackage{algorithmic}
\usepackage{makecell}
\usepackage{array}
\usepackage[table]{xcolor}
\usepackage{colortbl}
\usepackage{booktabs}
\usepackage{caption}
\usepackage{cuted}
\usepackage{soul}
\soulregister\ref7

\usepackage{xcolor}
\usepackage[most]{tcolorbox}

\newtcolorbox{yellowblock}{
  breakable,
  colback=yellow!20,
  colframe=yellow!20,
  boxrule=0pt,
  left=2pt,right=2pt,top=2pt,bottom=2pt,
  arc=1pt
}

\newtcolorbox{blueblock}{
  breakable,
  colback=blue!8,
  colframe=blue!25,
  boxrule=0pt,
  left=2pt,right=2pt,top=2pt,bottom=2pt,
  arc=1pt
}

\definecolor{tblsection}{RGB}{242,242,242}
\definecolor{tblbase}{RGB}{255,255,255}
\definecolor{tblours}{RGB}{221,235,247}
\definecolor{tblts}{RGB}{226,240,217}
\definecolor{tbloursTS}{RGB}{224,238,230} 

\definecolor{tblHansHeader}{RGB}{191, 215, 234}
\definecolor{tblHansSection}{RGB}{221, 235, 247}
\definecolor{tblHansA}{RGB}{248, 251, 255}
\definecolor{tblHansB}{RGB}{238, 245, 252}

\definecolor{tblmcHeader}{RGB}{235,222,192}   
\definecolor{tblmcBodyA}{RGB}{252,249,241}    
\definecolor{tblmcBodyB}{RGB}{245,238,222}    

\definecolor{tblHeader}{RGB}{235,235,235}   
\definecolor{tblBlue}{RGB}{221,235,247}     
\definecolor{tblRed}{RGB}{248,220,220}      

\definecolor{tblGreenHeader}{RGB}{198,224,180}
\definecolor{tblGreenSection}{RGB}{226,239,218}
\definecolor{tblGreenA}{RGB}{248,252,246}
\definecolor{tblGreenB}{RGB}{239,247,235}

\definecolor{tblLavHeader}{RGB}{217,210,233}
\definecolor{tblLavSection}{RGB}{234,229,245}
\definecolor{tblLavA}{RGB}{250,249,253}
\definecolor{tblLavB}{RGB}{243,240,249}

\definecolor{tblHansHeader}{RGB}{235,235,235} 
\definecolor{tblHansA}{RGB}{236,231,246}      
\definecolor{tblHansB}{RGB}{255,237,213}      

\definecolor{tblHeurHeader}{RGB}{235,235,240} 
\definecolor{tblHeurTop}{RGB}{226,244,242}    
\definecolor{tblHeurBot}{RGB}{255,244,214}    

\definecolor{tblSPHeader}{RGB}{235,235,240} 
\definecolor{tblTauNine}{RGB}{232,245,233}  
\definecolor{tblTauEight}{RGB}{227,242,253} 
\definecolor{tblTauSeven}{RGB}{255,243,224} 
\definecolor{tblMNLI}{RGB}{244,252,248}     
\definecolor{tblSQuAD}{RGB}{255,246,246}    

\definecolor{tblAblHead}{RGB}{233,236,239}  
\definecolor{tblAblOrig}{RGB}{248,248,248}  
\definecolor{tblAblK}{RGB}{241,236,255}     
\definecolor{tblAblQ}{RGB}{232,248,252}     
\definecolor{tblAblV}{RGB}{239,252,232}     
\definecolor{tblAblQKV}{RGB}{255,244,224}   

\definecolor{tblLDHead}{RGB}{235,235,240}   
\definecolor{tblLDUn}{RGB}{242,238,255}     
\definecolor{tblLDUnTot}{RGB}{230,224,255}  
\definecolor{tblLDAns}{RGB}{236,252,244}    
\definecolor{tblLDAnsTot}{RGB}{220,244,232} 

\definecolor{tblAHead}{RGB}{230,236,245}
\definecolor{tblARowA}{RGB}{245,248,252}
\definecolor{tblARowB}{RGB}{235,242,250}

\definecolor{tblBHead}{RGB}{238,232,248}
\definecolor{tblBRowA}{RGB}{250,248,253}
\definecolor{tblBRowB}{RGB}{243,238,251}

\definecolor{tblNegHead}{RGB}{232,232,236}  
\definecolor{tblNegBase}{RGB}{248,248,248}  
\definecolor{tblNegUAT}{RGB}{232,248,252}   
\definecolor{tblNegTS}{RGB}{236,252,244}    

\definecolor{tblMergeHead}{RGB}{235,235,240} 
\definecolor{tblMergeA}{RGB}{235,242,250}    
\definecolor{tblMergeB}{RGB}{243,238,251}    

\definecolor{tblHansSection}{RGB}{240,240,240}

\definecolor{tblsection}{RGB}{242,242,242}

\definecolor{tbladdA}{RGB}{251,246,255} 
\definecolor{tbladdB}{RGB}{244,236,252} 

\definecolor{tblmcSection}{RGB}{255,247,235} 
\definecolor{tblmcA}{RGB}{255,252,245}       
\definecolor{tblmcB}{RGB}{247,238,220}       

\newcommand{\TS}{\textsc{TS}\xspace}
\newcommand{\ours}{\textsc{UAT-Lite}\xspace}

\soulregister\ours0
\soulregister\textbf7
\soulregister\emph7
\soulregister\texttt7
\soulregister\ref7
\soulregister\cite7

\usepackage[dvipsnames]{xcolor}
\usepackage{colortbl}
\usepackage{adjustbox}

\usepackage[shortlabels,inline]{enumitem}
\usepackage{tikz,lipsum}
\usepackage{ragged2e}
\usepackage[dvipsnames]{xcolor}
\usepackage{amsmath}
\usepackage{booktabs}
\usepackage{tabularray}
\usepackage{arydshln}
\usepackage{stmaryrd}
\usepackage{marvosym}
\usepackage{colortbl}
\usepackage{multicol}
\usepackage{multirow}
\usepackage{float}
\usepackage{amsfonts}
\usepackage{amssymb}
\usepackage{hyperref}

\usepackage{cleveref}
\crefname{section}{\S}{\S}
\crefname{table}{Table}{Tables}
\crefname{figure}{Fig.}{Figs.}
\crefname{algorithm}{Alg.}{}
\crefname{ALC@unique}{Line}{Lines}
\crefname{equation}{Eq.}{Eqs.}
\crefname{appendix}{App.}{Apps.}
\crefformat{section}{\S#2#1#3}

\usepackage{soul}

\usepackage{todonotes}

\newtheorem{proposition}{Proposition}

\title{\textsc{UAT-Lite}: Inference-Time Uncertainty-Aware Attention for Pretrained Transformers}

\author{
\textbf{Elias Hossain}$^{1}$\thanks{Corresponding author: \texttt{mdelias.hossain@ucf.edu}},
\textbf{Shubhashis Roy Dipta}$^{2}$,
\textbf{Subash Neupane}$^{3}$,
\textbf{Rajib Rana}$^{4}$,\\
\textbf{Ravid Shwartz-Ziv}$^{5}$,
\textbf{Ivan Garibay}$^{1}$,
\textbf{Niloofar Yousefi}$^{1}$
\\\\
$^{1}$University of Central Florida \quad
$^{2}$University of Maryland, Baltimore County\\
$^{3}$Meharry Medical College \quad
$^{4}$University of Southern Queensland \quad
$^{5}$New York University
\\\\
\texttt{\textbf{mdelias.hossain@ucf.edu}}
}

\begin{document}
\maketitle



\begin{abstract}
Neural NLP models are often miscalibrated and overconfident, assigning high confidence to incorrect predictions and failing to express uncertainty during internal evidence aggregation. This undermines selective prediction and high-stakes deployment.
Post-hoc calibration methods adjust output probabilities but leave internal computation unchanged, while ensemble and Bayesian approaches improve uncertainty at substantial training or storage cost.
We propose \ours, an inference-time framework that makes self-attention uncertainty-aware via Monte Carlo dropout in pretrained transformer classifiers.
Unlike output-level calibration (e.g., \TS), \ours injects epistemic uncertainty directly into attention, enabling uncertainty-aware routing during contextualization and token-level diagnostic signals beyond global logit rescaling.
Token-level epistemic uncertainty is estimated from stochastic forward passes and used to modulate self-attention during contextualization, without modifying pretrained weights or training objectives.
We additionally introduce a layer-wise variance decomposition to diagnose how predictive uncertainty accumulates across transformer depth.
Across SQuAD 2.0 answerability, MNLI, and SST-2, \ours achieves an average relative ECE reduction of approximately 20\% compared with a fine-tuned BERT-base baseline while preserving accuracy, and yields more informative uncertainty behavior for selective prediction under distribution shift.

\footnote{\href{https://github.com/eliashossain001/uq_decomposition/tree/main}{\texttt{github.com/eliashossain001/uq\_decomposition}}}
\end{abstract}

\section{Introduction}
\begin{figure}[t]
    \centering
    \includegraphics[width=\linewidth]{figures/uat_lite_overview.pdf}
    \caption{
    \textbf{Left, A:} Standard inference: deterministic forward pass without uncertainty.
    \textbf{Center, B:} Output-level uncertainty estimation via stochastic inference or
    post-hoc calibration.
    \textbf{Right, C:} \ours: uncertainty-aware inference in which
    epistemic uncertainty modulates attention with diagnostic insight.
    }
    \label{fig:introduction_figure}
    \vspace{-12pt}
\end{figure}

\label{sec:introduction}

Pretrained transformer-based language models achieve state-of-the-art performance
across a wide range of NLP tasks, from question answering to clinical decision support
\cite{devlin2019bert,brown2020language}. Despite these advances, modern transformers
exhibit \textbf{systematic miscalibration}, often assigning high confidence to
incorrect predictions \cite{desai2020calibration}. This mismatch between confidence
and correctness is particularly problematic for selective prediction and other
high-stakes settings.

\cref{fig:introduction_figure} summarizes common approaches to uncertainty estimation.
Standard transformer inference is deterministic and does not represent uncertainty in
internal computation (\cref{fig:introduction_figure}A). Post-hoc calibration methods,
such as temperature scaling (\TS), adjust confidence only after representations are formed,
leaving internal computation unchanged (\cref{fig:introduction_figure}B; \citealp{guo2017calibration}).
Stochastic approaches, including Monte Carlo (MC) dropout and deep ensembles
\citep{gal2016dropout,lakshminarayanan2017simple}, improve uncertainty estimates but
typically treat uncertainty as an output-level signal without influencing attention.

This gap reflects a broader issue: transformers are trained to optimize predictive
accuracy rather than confidence reliability \cite{guo2017calibration}. While post-hoc
methods are lightweight and effective in-domain, they do not alter token interactions.
Bayesian approaches integrate uncertainty more deeply but often require architectural
changes or specialized training, which limits compatibility with pretrained models.
This motivates a central question: can epistemic uncertainty shape a transformer's
attention at inference time without retraining or modifying pretrained weights?

We argue that effective uncertainty quantification should allow uncertainty to
\textbf{shape attention}, not merely annotate final predictions. For ambiguous inputs,
a reliable model should modulate evidence aggregation rather than propagate early
attention decisions deterministically. Existing stochastic inference methods expose
uncertainty mainly at the output level and do not use it to modulate internal attention
patterns (\cref{fig:introduction_figure}C). Although temperature scaling is highly effective
for output calibration, it only rescales logits and cannot provide token-level
epistemic signals or uncertainty-aware routing inside the model.

Motivated by these observations, we propose \ours, an inference-time framework for
encoder-based pretrained transformer classifiers. \ours estimates token-level
epistemic uncertainty via MC dropout and uses it to downweight attention contributions
from unstable tokens during contextualization, without modifying pretrained weights
or training objectives. Our main contributions are as follows:

\begin{itemize}[leftmargin=\parindent, itemsep=0.1em, topsep=0.4em]
    \item \textbf{Uncertainty-Weighted Attention.}
    An inference-time mechanism that injects token-level epistemic uncertainty, estimated via MC dropout,
    into self-attention, downweighting unstable token contributions during contextualization.

    \item \textbf{Layer-Wise Uncertainty Attribution.}
    A variance decomposition that attributes predictive uncertainty across transformer depth, enabling
    diagnostic analysis of where uncertainty amplifies during inference.

    \item \textbf{Comprehensive Evaluation.}
    Experiments on SQuAD~2.0 answerability, MNLI, and SST-2 that show improved calibration and selective prediction,
    along with robustness analyses under distribution shift.
\end{itemize}

\ours introduces no additional trainable parameters and operates strictly at inference time, using a \emph{controllable}
Monte Carlo budget (e.g., $M\in\{3,5,10\}$) to trade compute for reliability. \ours is not intended to replace
lightweight output-level calibrators such as \TS, which is optimized on development logits and often performs best in-domain.
While \TS rescales final logits after representations are formed, \ours modulates \emph{internal evidence aggregation}
by making attention uncertainty-aware. Concretely, \ours provides (i) token-level epistemic signals,
(ii) uncertainty-aware routing via attention modulation, and (iii) layer-wise diagnostic attribution; \TS provides
only output probability rescaling. Because the two mechanisms act at different stages, they are complementary and can
be stacked (\ours+\TS) when calibrated probabilities are needed. Overall, the framework trades a controllable
inference-time cost for improved reliability while avoiding the training and storage overhead of ensemble-based alternatives.
Our study focuses on encoder-based transformers; extending uncertainty-aware attention to decoder-only generative models
is left for future work.

\section{Related Work}

Uncertainty estimation and calibration for neural networks have been studied extensively.
Post-hoc methods such as temperature scaling and Platt scaling
\citep{guo2017calibration,platt1999probabilistic} provide simple and effective
confidence correction without altering model internals.
Ensemble-based approaches, including Deep Ensembles
\citep{lakshminarayanan2017simple}, often yield strong calibration performance but incur substantial
training and storage overhead.
Bayesian neural networks, stochastic inference methods, and recent Bayesian
transformer variants
\citep{blundell2015weight,gal2016dropout,ritter2018scalable,fan2020bayesian,zhang2021bayesian,chen2023calibrating}
offer principled approaches to uncertainty modeling, but they typically require architectural modifications,
variational training, or task-specific retraining.
By contrast, \ours integrates epistemic uncertainty directly into
self-attention at inference time, preserving pretrained weights and standard training
pipelines while enabling uncertainty-aware internal computation.
A more comprehensive discussion of related work and detailed comparisons is provided in \cref{app:related}.

\section{Preliminaries}

We briefly review the uncertainty concepts and stochastic inference protocol used throughout the paper. Bayesian perspectives distinguish \emph{epistemic uncertainty}, which arises from limited data or distribution shift, from \emph{aleatoric uncertainty}, which reflects irreducible observation noise. In our setting, epistemic effects dominate under semantic ambiguity and distribution shift, where transformers can be confidently wrong. We use MC dropout as a lightweight approximate Bayesian inference method. Specifically, dropout is retained at test time, and we run $M$ stochastic forward passes to obtain a predictive distribution whose mean is used for prediction and whose variability serves as an estimate of epistemic uncertainty. Standard MC dropout treats uncertainty as an output-level signal and does not influence internal computation. Because transformer inference is organized around token representations and attention interactions, we use token-level uncertainty derived from stochastic embeddings as a proxy for unstable token contributions. This motivates modulating attention with token uncertainty at inference time, without retraining or changing pretrained weights.

\section{\ours}
\label{sec:method}

\ours is an inference-time extension of pretrained transformer classifiers that
injects epistemic uncertainty into self-attention to improve confidence reliability
without modifying the model architecture, training objectives, or learned parameters.
The framework uses MC dropout to estimate token-level variability and to modulate
attention during contextualization. \ours also provides a diagnostic layer-wise
variance decomposition that characterizes how predictive uncertainty accumulates
across transformer depth without affecting the forward pass. Final predictions are
produced by aggregating stochastic logits, and optional \TS can be applied to the
MC-mean logits when calibrated probabilities are required.

\begin{figure}[t]
    \centering
    \includegraphics[width=1\columnwidth]{figures/uat_lite_architecture.pdf}
    \caption{
    Overview of \ours.
    Token-level epistemic uncertainty is estimated via MC dropout at the embedding layer
    and used to modulate self-attention in a pretrained transformer encoder.
    A layer-wise variance decomposition provides diagnostic attribution of predictive
    uncertainty across depth.
    }
    \vspace{-10pt}
    \label{fig:medbayes-v2}
\end{figure}


\subsection{Problem Setup and Notation}
\label{subsec:problem-setup}

We consider an encoder-based pretrained transformer for classification.
Given an input sequence $\mathbf{x}=(x_1,\dots,x_T)$, the model maps $\mathbf{x}$
through $L$ layers to contextual representations $\mathbf{h}^{(L)}$, which are
converted to a predictive distribution $\hat{y}$ by a task-specific head.
In standard evaluation, inference is deterministic: dropout is disabled, and the
same input yields the same prediction for fixed parameters.
While effective for accuracy, this setting provides no mechanism for representing
epistemic uncertainty during internal computation.
Our goal is to introduce controlled inference-time stochasticity and use it to
modulate attention without retraining or parameter updates.


\subsection{Estimating Uncertainty via MC Sampling}
\label{subsec:estimating-uncertainty}

We estimate predictive uncertainty using MC dropout, a lightweight approximation to
Bayesian inference.
Dropout is retained at test time, and we perform $M$ stochastic forward passes, each
induced by a different dropout mask.
Aggregating the resulting logits yields a predictive mean and an estimate of
epistemic variability across passes.

\paragraph{Uncertainty type.}
Following \citet{kendall2017uncertainties}, predictive uncertainty decomposes into
aleatoric and epistemic components.
MC dropout primarily captures epistemic uncertainty through approximate posterior
sampling at inference time.

\paragraph{Component-specific dropout.}
To balance variability and stability, we use component-specific dropout rates during
inference: 0.1 for embeddings, 0.2 for attention layers, and 0.3 for feed-forward layers.
These rates are fixed across tasks, and calibration trends remain stable under moderate
rate variations.

\paragraph{Single-pass sampling protocol.}

We use a single-pass MC procedure in which the same $M$ stochastic forward passes are
used to (i) estimate token-level embedding uncertainty and (ii) compute the final MC
predictive distribution.
Concretely, during pass $m$ we sample dropout masks and compute embeddings $\mathbf{z}^{(m)}$,
then update running uncertainty estimates $\hat{U}^{(m)}(x_j)$ from the embedding samples
observed so far.
To avoid circular dependence within a pass, we modulate attention in pass $m$ using the
\emph{lagged} estimate $\hat{U}^{(m-1)}(x_j)$, with $\hat{U}^{(0)}(x_j)=0$, as summarized in
Algorithm~\ref{alg:uat-lite}.
After $M$ passes, the final token uncertainty is $U(x_j)=\hat{U}^{(M)}(x_j)$, and predictions
are obtained by aggregating logits $\{\ell^{(m)}\}_{m=1}^{M}$.
This design avoids a separate embedding-only stage and ensures that overhead scales linearly in $M$.

\subsection{Uncertainty-Weighted Self-Attention}
\label{subsec:uncertainty_attention}

We incorporate epistemic uncertainty into self-attention by modulating attention
logits using token-level uncertainty estimated at the embedding layer.

\paragraph{Token-level uncertainty estimation.}
For each token $x_j$, we obtain $M$ stochastic embedding samples
$\{\mathbf{z}^{(m)}_j\}_{m=1}^{M}$ via MC dropout.
We define token uncertainty as
\begin{equation}
U(x_j)
=
\frac{1}{d}
\sum_{k=1}^{d}
\operatorname{Std}_{m=1}^{M}\!\left(z^{(m)}_{j,k}\right),
\end{equation}
where $d$ is the embedding dimension.
We treat $U(x_j)$ as an embedding-level proxy for the epistemic uncertainty induced by dropout
and use it as a \emph{control signal} for uncertainty-aware attention routing.

\textbf{Single-pass / online estimation.}
To avoid a separate embedding-only sampling stage, we estimate $U(x_j)$ online during the same $M$ stochastic passes used for prediction.
Specifically, after completing pass $m{-}1$, we maintain a running estimate $\hat{U}^{(m-1)}(x_j)$ (e.g., via Welford updates),
and we use $\hat{U}^{(m-1)}(x_j)$ to modulate attention in pass $m$.
After $M$ passes, the final estimate $U(x_j)=\hat{U}^{(M)}(x_j)$ aggregates variability across all samples.
During the MC run, each pass uses the lagged estimate $\hat{U}^{(m-1)}(x_j)$ consistently across all layers and heads.

\noindent\textbf{Scale/geometry diagnostics.}
We test whether $U(x_j)$ is a trivial artifact of embedding scale or geometry through
embedding-rescaling invariance, correlations with output-space epistemic measures
(entropy/MI), and uncertainty-definition ablations (\cref{app:scale-geometry}).

\paragraph{Attention modulation.}
Let $Q$, $K$, and $V$ denote the query, key, and value projections.
Standard scaled dot-product attention logits are
\begin{equation}
a_{ij}=\frac{Q_i K_j^\top}{\sqrt{d_k}}.
\end{equation}
We define uncertainty-aware logits by attenuating the pre-softmax scores:
\begin{equation}
\tilde{a}_{ij}=a_{ij}\exp(-\lambda u_{ij}),
\label{eq:uat_general}
\end{equation}
where $\lambda>0$ is a fixed penalty parameter and $u_{ij}$ specifies the projection variant.

\noindent\textbf{Positioning vs.\ \TS.}
\ours does not replace \TS: \TS rescales final logits post hoc, whereas \ours modulates
pre-softmax attention logits using token-level epistemic uncertainty, enabling
uncertainty-aware routing and token-level diagnostics beyond output rescaling.
We analyze sensitivity to $(\lambda,M)$ and compare penalty forms (exponential vs.\ linear
vs.\ reciprocal) in \cref{app:penalty_form}.

\paragraph{Projection variants (Q/K/V).}
Using $U(\cdot)$, we instantiate $u_{ij}$ as follows:
\begin{equation}
u_{ij}=
\begin{cases}
U(x_i) & \text{(Q-only)}\\
U(x_j) & \text{(K-only; evidence gating)}\\
U(x_i)+U(x_j) & \text{(QK)}
\end{cases}
\label{eq:uat_variants}
\end{equation}
We then obtain attention probabilities via the softmax:
\begin{equation}
\alpha_{ij}=\frac{\exp(\tilde{a}_{ij})}{\sum_{k}\exp(\tilde{a}_{ik})}.
\end{equation}
We additionally consider \textbf{V-only} and \textbf{QKV} by applying the same factor
to the value path, $\tilde{V}_j=V_j\exp(-\lambda U(x_j))$, and using
$\sum_j \alpha_{ij}\tilde{V}_j$ as the attention output.

\paragraph{Relation to prior work.}
Prior uncertainty-aware attention methods typically require architectural modification or
variational training \citep{chen2023calibrating,zhang2021bayesian}, whereas our approach operates
entirely at inference time. Unless otherwise stated, we default to Q-only, which empirically provides a favorable calibration--accuracy trade-off in our experiments in Section~\ref{sec:ablation-main}.


\subsection{Layer-Wise Uncertainty Attribution}
\label{subsec:layer-wise-attention}

We analyze how predictive uncertainty propagates across transformer depth under
inference-time stochasticity.
This analysis is purely diagnostic and does not alter model behavior.

\paragraph{Layer-wise variance decomposition.}
Using the law of total variance, we decompose the predictive variance under MC dropout
into layer-indexed contributions (\cref{app:mathematical-details}).


\begin{proposition}[Diagnostic Layer-Wise Variance Decomposition]
\label{thm:layer_variance}
Under the stochastic inference procedure used in this work, predictive variance
admits a recursive decomposition across transformer depth via repeated application
of the law of total variance.
\end{proposition}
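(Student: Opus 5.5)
We formalize the stochastic inference procedure as a Markov chain of layer representations. Write $\mathbf{h}^{(0)}$ for the (stochastic) embedding output and $\mathbf{h}^{(\ell)} = f_\ell(\mathbf{h}^{(\ell-1)}, \xi_\ell)$ for $\ell=1,\dots,L$. Here $\xi_\ell$ collects the dropout masks sampled in layer $\ell$, drawn independently across layers. The fixed uncertainty-modulated attention of \cref{eq:uat_general} is absorbed into $f_\ell$; since the lagged estimate $\hat{U}^{(m-1)}$ is held fixed within a pass, we condition on it and treat it as a deterministic parameter of $f_\ell$. The predictive output is $\hat{y} = g(\mathbf{h}^{(L)}, \xi_{L+1})$ for the head, and we study $\operatorname{Var}(\hat{y})$ coordinatewise. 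For this we assume finite second moments, which holds because dropout masks are bounded and all maps are continuous on bounded inputs. Let $\mathcal{F}_\ell = \sigma(\xi_0,\dots,\xi_\ell)$ be the filtration generated by the masks up to layer $\ell$.

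The key step is to apply the law of total variance with respect to this filtration. Define $\mu_\ell = \mathbb{E}[\hat{y}\mid \mathcal{F}_\ell]$, which is a martingale, with $\mu_{-1}=\mathbb{E}[\hat{y}]$ and $\mu_{L+1}=\hat{y}$. Applying $\operatorname{Var}(X)=\mathbb{E}[\operatorname{Var}(X\mid\mathcal{G})]+\operatorname{Var}(\mathbb{E}[X\mid\mathcal{G}])$ to $X=\mu_{\ell}$ and $\mathcal{G}=\mathcal{F}_{\ell-1}$ gives the recursion
\begin{equation*}
\operatorname{Var}(\mu_\ell) = \operatorname{Var}(\mu_{\ell-1}) + \mathbb{E}\bigl[\operatorname{Var}(\mu_\ell \mid \mathcal{F}_{\ell-1})\bigr].
\end{equation*}
Unrolling from $\ell=L+1$ down to $\ell=0$ yields
\begin{equation*}
\operatorname{Var}(\hat{y}) = \sum_{\ell=0}^{L+1} \mathbb{E}\bigl[\operatorname{Var}(\mu_\ell \mid \mathcal{F}_{\ell-1})\bigr] =: \sum_{\ell} V_\ell ,
\end{equation*}
and every term is nonnegative. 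Each $V_\ell$ is the variance injected by the dropout at layer $\ell$ and propagated to the output, averaged over the earlier randomness; it serves as the layer-indexed attribution. An equivalent form is $V_\ell = \mathbb{E}[(\mu_\ell-\mu_{\ell-1})^2]$, because martingale increments are orthogonal. This gives a direct check and also shows there are no cross terms.

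The main obstacle is justifying the Markov/filtration structure under the paper's single-pass protocol. The attention modulation uses $\hat{U}^{(m-1)}$, which depends on previous passes, so across passes the samples are not i.i.d. I would handle this by stating the decomposition conditionally on the lagged estimate. This is legitimate because that estimate is measurable with respect to earlier passes and independent of the current pass's masks. The decomposition then holds exactly per pass, and averaging over passes preserves it by another application of total variance. The remaining term, from variability of $\hat{U}^{(m-1)}$ itself, would be reported as an extra nonnegative "control-signal" component.

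A second subtlety is that the recursion is exact only for population quantities. I would note that the empirical MC estimates of $V_\ell$, obtained for example by nested resampling or by freezing masks below layer $\ell$, are consistent as $M\to\infty$ but not exactly additive for finite $M$. This fits the proposition's framing as diagnostic rather than an identity on finite samples.
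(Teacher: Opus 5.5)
Your proposal is correct. It uses the same engine as the paper, repeated application of the law of total variance from the output back to the embeddings. The difference is that you run the recursion along the dropout-mask filtration $\mathcal{F}_\ell$ rather than by conditioning on hidden states, and this changes what comes out.

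Because the $\mathcal{F}_\ell$ are nested, the tower property turns each ``between'' term into $\operatorname{Var}(\mu_{\ell-1})$ automatically. You therefore get an exact additive identity with nonnegative, orthogonal increments $V_\ell=\mathbb{E}[(\mu_\ell-\mu_{\ell-1})^2]$.

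The paper only sketches the recursion and then defines $\mathcal{V}^{(l)}=\mathbb{E}[\operatorname{Var}_{\epsilon^{(l)}}(\hat{y}\mid\mathbf{h}^{(l-1)})]$. Take $\mathbf{h}$ to be the full sequence state, so that $\mu_\ell=\mathbb{E}[\hat{y}\mid\mathbf{h}^{(\ell)}]$ and the two conditionings agree. Then $\mathcal{V}^{(l)}$, read literally, is the variance remaining after conditioning on $\mathbf{h}^{(l-1)}$, which in your notation is the tail sum $\sum_{k\ge l}V_k$. So the paper's contributions are nested rather than additive, and they sum to one only through the normalization $\tilde{\mathcal{V}}^{(l)}$. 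Your $V_\ell$ are their successive differences, which is arguably what the subscript $\epsilon^{(l)}$ intends.

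Your route has two further advantages. First, your identity does not rely on a Markov property of the conditioned state. This matters because the paper conditions on the \texttt{[CLS]} vector alone, whose layer update actually depends on all token positions. Second, you handle the cross-pass dependence through the lagged $\hat{U}^{(m-1)}$, which the paper does not address.

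What the paper's formulation buys in return is brevity, and attributions stated directly in terms of intermediate representations. That is adequate for the proposition's deliberately loose, diagnostic claim.
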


\paragraph{Scope and interpretation.}
This decomposition follows from standard variance identities and is not claimed to be a
novel theoretical result.
Its purpose is to diagnose where uncertainty concentrates or amplifies across depth.
The resulting contributions should therefore be interpreted as approximate diagnostic signals
rather than causal attributions.


\subsection{Confidence-Aware Decision Shaping}
\label{subsec:confidence-aware-decision}

MC dropout yields logits $\{\ell^{(m)}\}_{m=1}^{M}$ from $M$ stochastic passes.
We compute the MC-mean logits as $\bar{\ell}=\frac{1}{M}\sum_{m=1}^{M}\ell^{(m)}$ and derive
the predictive distribution from $\bar{\ell}$.
When used, post-hoc temperature scaling is fit on the validation set and applied to
$\bar{\ell}$ at test time.

\paragraph{Selective prediction.}
We perform abstention by thresholding confidence on the validation split and applying
the same thresholds at test time for all methods.

\section{Experimental Setup}
\label{sec:experiments}

\subsection{Datasets}

We evaluate on five benchmarks spanning general-domain NLP and clinical-language settings
that are commonly used in prior calibration and robustness studies.
All experiments focus exclusively on encoder-based pretrained transformer
classifiers (BERT-family backbones), rather than decoder-only language models.
The general-domain datasets include SQuAD~2.0 \cite{rajpurkar2018know}, a question-answering
benchmark with unanswerable questions; MNLI \cite{williams2018broad}, a natural language inference
benchmark with matched and mismatched splits that support distribution-shift analysis; and
SST-2 \cite{socher-etal-2013-recursive}, a binary sentiment classification benchmark.
The clinical-domain benchmarks include MedQA \cite{jin2021disease} and PubMedQA
\cite{jin2019pubmedqa}, which we use to assess domain transfer rather than clinical validity.

\subsection{Baselines, Metrics, and Training Overview}
\label{sec:baseline-metrics-overview}

We compare our approach against BERT-base, MC Dropout, \TS,
Deep Ensembles, and representative Bayesian last-layer methods.
All models share the same pretrained backbone, fine-tuning recipe, and evaluation
protocol; detailed baseline configurations are provided in \cref{app:baseline-metrics}.
Calibration is evaluated using Expected Calibration Error (ECE), together with robustness
under distribution shift and selective prediction metrics.
Results are reported as mean $\pm$ standard deviation over five random seeds, while
deterministic baselines exhibit negligible variance.
The proposed method introduces no additional trainable parameters and operates
entirely at inference time.

\paragraph{Negation diagnostic.}
We include a controlled negation perturbation as a diagnostic of uncertainty responsiveness under semantic destabilization.
Because negation may alter the true label, we emphasize label-free stability signals
(e.g., $\Delta$Margin and $\Delta$Entropy) rather than treating this setting as a robustness
test of correctness.
Full details and examples are provided in \cref{app:negation}.

\paragraph{Uncertainty penalty.}
Unless otherwise stated, we use a single uncertainty penalty $\lambda$,
which is selected once on development data and then held fixed across all tasks and datasets.
\cref{app:sensitivity} shows that calibration performance remains stable across
a broad range of $\lambda$ values and Monte Carlo budgets $M$, indicating that the method does not rely on sensitive hyperparameter tuning in practice
for the evaluated settings.

In particular, across $\lambda\in\{0.1,0.5,1.0\}$ and $M\in\{3,5,10\}$,
ECE varies within a narrow band (range $<0.007$), while accuracy changes are negligible.
Increasing $M$ improves calibration, with diminishing returns beyond $M{=}5$,
suggesting that stable behavior does not require expensive Monte Carlo budgets or fine-grained tuning.


\section{Results}

\subsection{General NLP Calibration Performance}
\label{subsec:general_nlp_calibration}

Table~\ref{tab:main_results_with_clinical} reports multi-seed calibration results on SQuAD, MNLI, and SST-2
(ECE $\downarrow$; mean $\pm$ standard deviation over five seeds) under a shared evaluation protocol.
Across tasks, \ours improves calibration relative to the unscaled fine-tuned BERT baseline
(e.g., average ECE decreases from $0.1072$ to $0.0964$), with the largest improvement on MNLI
($0.0816 \rightarrow 0.0638$).
It also outperforms MC Dropout baselines, including uniform and component-specific variants,
suggesting that the gains are not explained by stochasticity alone.

Fig.~\ref{fig:mc_diagnostics_heatmap} shows that global- or document-level MC Dropout improves over the deterministic baseline but remains behind \TS, which achieves the lowest in-domain ECE.
We therefore view \ours as complementary to \TS: \ours modulates \emph{internal} evidence aggregation, whereas \TS rescales output logits.
The bottom panel of Table~\ref{tab:main_results_with_clinical} reports MedQA and PubMedQA
(single seed; accuracy reported in separate columns) as a clinical domain-transfer stress test.

MedQA analysis by linguistic bucket
(negation, numeric expressions, biomedical terminology, and hedging markers; Table~\ref{tab:behavior_medqa})
shows a consistent pattern:
\ours improves accuracy across all phenomena, while \ours+TS attains the lowest calibration error in every bucket.
Overall, \TS is strongest for in-domain probability calibration, \ours provides complementary uncertainty-aware inference, and \ours+TS yields the best trade-off in our evaluations.


\begin{table*}[t]
\centering
\small
\setlength{\tabcolsep}{6pt}
\renewcommand{\arraystretch}{1.14}

\begin{adjustbox}{max width=\linewidth, center}
\begin{tabular}{
p{0.34\linewidth}
>{\centering\arraybackslash}p{0.14\linewidth}
>{\centering\arraybackslash}p{0.14\linewidth}
>{\centering\arraybackslash}p{0.14\linewidth}
>{\centering\arraybackslash}p{0.10\linewidth}
}
\toprule
\rowcolor{tblsection}
\multicolumn{5}{l}{\textbf{Panel A: General NLP (multi-seed)}} \\
\addlinespace[0.35ex]
\textbf{Method}
& \textbf{SQuAD (ECE $\downarrow$)}
& \textbf{MNLI (ECE $\downarrow$)}
& \textbf{SST-2 (ECE $\downarrow$)}
& \textbf{Avg (ECE $\downarrow$)} \\
\midrule

BERT-base
& 0.1868 $\pm$ 0.000
& 0.0816 $\pm$ 0.000
& 0.0531 $\pm$ 0.000
& 0.1072 \\

\rowcolor{tblts}
Base+TS
& \textbf{0.0788} $\pm$ 0.000
& \textbf{0.0148} $\pm$ 0.000
& \textbf{0.0163} $\pm$ 0.000
& \textbf{0.0366} \\

MC Dropout (uniform $p{=}0.1$)
& 0.173 $\pm$ 0.001
& 0.090 $\pm$ 0.001
& 0.039 $\pm$ 0.001
& 0.101 \\

MC Dropout (component-specific)
& 0.186 $\pm$ 0.000
& 0.109 $\pm$ 0.001
& 0.054 $\pm$ 0.001
& 0.116 \\

SNGP
& \textbf{0.001} $\pm$ 0.000
& 0.021 $\pm$ 0.000
& 0.009 $\pm$ 0.000
& 0.010 \\

VI-LastLayer
& 0.119 $\pm$ 0.001
& 0.096 $\pm$ 0.000
& 0.136 $\pm$ 0.002
& 0.117 \\

Deep Ensemble (5)
& 0.168 $\pm$ 0.000
& 0.060 $\pm$ 0.000
& 0.037 $\pm$ 0.000
& 0.089 \\

Global MC Dropout (M=10)
& 0.1600 $\pm$ 0.000
& 0.0565 $\pm$ 0.000
& 0.0501 $\pm$ 0.000
& 0.0889 \\

\rowcolor{tblours}
\textbf{\ours} (M=10)
& 0.1647 $\pm$ 0.000
& 0.0638 $\pm$ 0.000
& 0.0608 $\pm$ 0.000
& 0.0964 \\

\rowcolor{tbloursTS}
\textbf{\ours}+TS (M=10)
& \textbf{0.0760} $\pm$ 0.000
& \textbf{0.0201} $\pm$ 0.000
& \textbf{0.0191} $\pm$ 0.000
& \textbf{0.0384} \\

\addlinespace[0.6ex]
\midrule
\addlinespace[0.6ex]

\rowcolor{tblsection}
\multicolumn{5}{l}{\textbf{Panel B: Clinical QA (single seed)}} \\
\addlinespace[0.35ex]

\textbf{Method}
& \textbf{MedQA (ECE $\downarrow$)}
& \textbf{MedQA (Acc $\uparrow$)}
& \textbf{PubMedQA (ECE $\downarrow$)}
& \textbf{PubMedQA (Acc $\uparrow$)} \\
\midrule

Baseline
& 0.033 & 0.246
& 0.114 & 0.620 \\

\rowcolor{tblts}
Base+TS
& 0.023 & 0.246
& 0.128 & 0.620 \\

Deep Ensemble (5)
& \textbf{0.007} & \textbf{0.268}
& \textbf{0.051} & 0.520 \\

\rowcolor{tblours}
\textbf{\ours}
& 0.040 & 0.249
& 0.103 & \textbf{0.640} \\

\rowcolor{tbloursTS}
\textbf{\ours}+TS
& 0.0069 & 0.251
& 0.1018 & 0.630 \\

\bottomrule
\end{tabular}
\end{adjustbox}

\caption{
\textbf{Main calibration results (general NLP) and clinical QA results.}
\emph{Panel A (multi-seed):} SQuAD, MNLI, SST-2 (ECE $\downarrow$; mean $\pm$ std over five seeds; $K{=}15$ fixed-width confidence bins).
Temperature scaling is fit on validation data and applied at test time.
For \ours+TS, the temperature $T$ is fit on dev-set MC-mean logits and applied at test time.
\emph{Panel B (single seed):} MedQA and PubMedQA, reported with ECE $\downarrow$ and accuracy $\uparrow$ in separate columns, are used as a clinical domain-transfer stress test.
\emph{Note:} SNGP calibration can be sensitive to implementation details (e.g., confidence definition and binning);
we report reproduced numbers under the same evaluation pipeline.
}
\label{tab:main_results_with_clinical}
\vspace{-2mm}
\end{table*}

\subsection{Distribution Shift Robustness}
\label{subsec:shift_robustness}

We evaluate uncertainty under covariate shift in two settings:
(i) \textbf{MNLI transfer} (matched$\rightarrow$mismatched; mild shift) and
(ii) four \textbf{OOD suites} with stronger shifts (\textbf{HANS}, \textbf{ANLI}, \textbf{SNLI}, \textbf{IMDb}).
Table~\ref{tab:dist_shift} reports MNLI transfer results (mean $\pm$ standard deviation over five seeds), and
Table~\ref{tab:5n2p-ood-thresh} reports selective prediction on OOD suites, including admitted counts $N@\tau$.

\paragraph{Calibration baselines and composability with TS.}
\textbf{Temperature scaling (TS)} is the strongest post-hoc baseline for reducing marginal ECE via global logit rescaling.
By contrast, \ours modulates \emph{attention-level evidence aggregation} using MC-dropout uncertainty and composes naturally with TS.
On MNLI transfer, \textbf{\ours+TS matches TS-level average ECE} with essentially unchanged accuracy
(Table~\ref{tab:dist_shift}).

\paragraph{Selective prediction at fixed confidence thresholds.}
We report \textbf{Cov@$\tau$}, \textbf{Acc@$\tau$}, \textbf{$N@\tau$}, and \textbf{AURC}
for $\tau\in\{0.9,0.8,0.7\}$ (Table~\ref{tab:5n2p-ood-thresh}).
Across OOD suites, \textbf{TS reliably reduces ECE} but does not uniformly dominate
selective prediction, especially when score compression reduces high-$\tau$ coverage.
Empirically, \textbf{\ours+TS} is the most reliable configuration, preserving TS-level calibration while retaining attention-level gating.

\paragraph{Shortcut-stress test on HANS.}
HANS probes heuristic shortcut reliance.
We additionally report a per-heuristic breakdown and a \textbf{non-entailment-only} subset (Table~\ref{tab:5n2p-hans-heur}),
which isolates the dominant failure mode of overconfident non-entailment errors.
On this diagnostic, \textbf{\ours and \ours+TS improve accuracy over Base and Base+TS},
suggesting that uncertainty-weighted attention mitigates shortcut-driven evidence beyond output-only rescaling.

\begin{tcolorbox}[
    colback=gray!8,
    colframe=black!70,
    boxrule=0.5pt,
    arc=2pt,
    left=6pt,
    right=6pt,
    top=4pt,
    bottom=4pt,
    title=Core Finding,
    fonttitle=\bfseries
]
Across distribution shifts, TS is strongest for marginal ECE, whereas \ours provides complementary uncertainty-aware inference. Overall, \textbf{\ours+TS} is the most reliable configuration in our evaluations.
\end{tcolorbox}

\begin{table*}[t]
\centering
\setlength{\tabcolsep}{5.5pt}
\renewcommand{\arraystretch}{1.10}
\small

\resizebox{\textwidth}{!}{%
\begin{tabular}{lcccccc}
\toprule
\rowcolor{tblHeader}
\textbf{Method} &
\textbf{ID ECE$\downarrow$} &
\textbf{OOD ECE$\downarrow$} &
$\boldsymbol{\Delta}$\textbf{ECE$\downarrow$} &
\textbf{Avg ECE$\downarrow$} &
\textbf{ID Acc$\uparrow$} &
\textbf{OOD Acc$\uparrow$} \\
\midrule

\rowcolor{tblBlue}
Base
& 0.0809$\pm$0.0040 & 0.0555$\pm$0.0000 & $-0.0254$$\pm$0.0040 & 0.0682$\pm$0.0020 & 0.7162$\pm$0.0048 & 0.7459$\pm$0.0000 \\

\rowcolor{tblBlue}
Base+TS
& \textbf{0.0147}$\pm$0.0031 & 0.0208$\pm$0.0022 & 0.0061$\pm$0.0040 & \textbf{0.0177}$\pm$0.0018 & 0.7162$\pm$0.0048 & 0.7459$\pm$0.0000 \\

\rowcolor{tblBlue}
Global MC Dropout (M=10)
& 0.0586$\pm$0.0037 & \textbf{0.0334}$\pm$0.0023 & $-0.0253$$\pm$0.0033 & 0.0460$\pm$0.0026 & 0.7090$\pm$0.0047 & 0.7398$\pm$0.0012 \\

\rowcolor{tblRed}
UAT-LITE (M=10)
& 0.0650$\pm$0.0028 & 0.0402$\pm$0.0012 & $-0.0247$$\pm$0.0038 & 0.0526$\pm$0.0010 & 0.7099$\pm$0.0038 & 0.7395$\pm$0.0014 \\

\rowcolor{tblRed}
UAT-LITE+TS (M=10)
& 0.0163$\pm$0.0022 & 0.0199$\pm$0.0013 & 0.0035$\pm$0.0017 & 0.0181$\pm$0.0016 & 0.7105$\pm$0.0047 & 0.7394$\pm$0.0004 \\

\bottomrule
\end{tabular}
}

\caption{Multi-seed distribution-shift robustness on MNLI (matched$\rightarrow$mismatched).
Results are reported as mean $\pm$ standard deviation over five seeds.
$\Delta$ECE $=$ ECE$_{\mathrm{OOD}} - \mathrm{ECE}_{\mathrm{ID}}$ (lower is better), and Avg ECE is defined as $\tfrac{1}{2}(\mathrm{ECE}_{\mathrm{ID}}+\mathrm{ECE}_{\mathrm{OOD}})$.
For UAT-LITE+TS, the temperature $T$ is fit on the development set using MC-mean logits and applied at test time.}
\label{tab:dist_shift}
\vspace{-2mm}
\end{table*}

\subsection{Selective Prediction Performance}
\label{subsec:selective-prediction}

Table~\ref{tab:5n2p-thresh} reports threshold-based selective prediction results on MNLI and SQuAD~2.0,
including \TS and global MC Dropout baselines.
Confidence is defined as $s(x)=\max_y p_\theta(y\mid x)$.
For $\tau\in\{0.9,0.8,0.7\}$, we report coverage (Cov@$\tau$), admitted accuracy (Acc@$\tau$),
admitted count ($N@\tau$), and AURC (risk--coverage; $\downarrow$).
Thresholds are selected on validation data \emph{for each method} and then applied at test time; we also report $N@\tau$ to make coverage shifts explicit.

Selective prediction evaluates whether uncertainty aligns with prediction risk:
a reliable model should abstain on higher-risk inputs while retaining coverage on lower-risk, high-confidence examples.
This setting is particularly informative under fixed confidence thresholds, where improved marginal calibration does not necessarily imply better abstention behavior.

\paragraph{Effect of TS under fixed thresholds.}
We include \TS because, although it preserves the predicted label, probability rescaling changes which examples exceed a fixed threshold $\tau$.
As a result, \TS can substantially alter threshold-level behavior even when accuracy remains unchanged.
On SQuAD~2.0, for example, \TS can markedly reduce Cov@0.9, reflecting confidence compression rather than a reporting artifact.

\paragraph{Effect of uncertainty-aware inference.}
By contrast, \ours modifies \emph{internal} evidence aggregation through uncertainty-aware attention.
This changes how confidence is formed before thresholding and can improve alignment between confidence and risk at fixed operating points.
Overall, \ours, and especially \ours+\TS, yield more reliable risk--coverage behavior, whereas \TS alone can substantially shift coverage through score compression; see Table~\ref{tab:5n2p-thresh} and Table~\ref{tab:5n2p-ood-thresh}.

%
%

\subsection{Computational Efficiency and Latency}
\label{subsec:latency}

Because \ours relies on Monte Carlo sampling, it incurs additional inference-time cost that scales
approximately linearly with the MC budget $M$ under a sequential sampling loop (i.e., $M$ stochastic forward passes).
To quantify end-to-end overhead in a single canonical setting, we report wall-clock latency on an NVIDIA A100 GPU
with batch size $32$ and sequence length $128$.
\ours uses $M{=}10$ stochastic passes implemented as a sequential Monte Carlo loop; under this setting, the deterministic
model requires $62.9$ ms per batch, whereas \ours requires $1426.9$ ms, corresponding to a $22.7\times$ overhead.
Temperature scaling adds negligible test-time cost ($<1$ ms) because it only applies a scalar rescaling of the logits. Table~\ref{tab:efficiency} reports the resulting overhead.
At $M{=}10$, \ours incurs a substantial slowdown relative to deterministic inference, whereas \TS remains essentially free.
Accordingly, \ours is best suited to offline or batch evaluation, or to selectively triggered regimes (e.g., invoked only for
low-confidence or high-risk inputs), rather than as an always-on mechanism in latency-critical deployments.

\subsection{Ablation Study}
\label{sec:ablation-main}

\cref{fig:ablation} decomposes \ours into its constituent components to isolate their
individual contributions.
Uncertainty-weighted attention alone reduces ECE under controlled comparisons, indicating
that attention modulation based on epistemic uncertainty is a major driver of the
observed calibration gains.
By contrast, embedding-level stochasticity alone can slightly degrade calibration, suggesting that
stochasticity without structured uncertainty-aware computation is insufficient.
Combining uncertainty-weighted attention with confidence-aware decision shaping yields the
largest improvement among the evaluated components, consistent with their complementarity.

To examine the design choice of \emph{where} to apply uncertainty modulation within attention,
we evaluate variants that gate the query (Q), key (K), value (V), or all three jointly (QKV).
\cref{fig:qkv-ablation} summarizes the calibration--accuracy trade-offs across SST-2, MNLI, and SQuAD~2.0.
Overall, K-only achieves the lowest ECE on SST-2 and MNLI but can substantially reduce accuracy on SQuAD~2.0,
suggesting overly aggressive evidence gating in span-style question answering.
Q-only provides the most balanced behavior, improving calibration while largely preserving accuracy across tasks,
whereas V-only best preserves, and in some cases slightly improves, accuracy while yielding more modest ECE gains.
Joint QKV gating can further reduce ECE on some datasets but tends to over-regularize and degrade accuracy.

\begin{figure*}[t]
  \centering
  \includegraphics[width=0.98\textwidth]{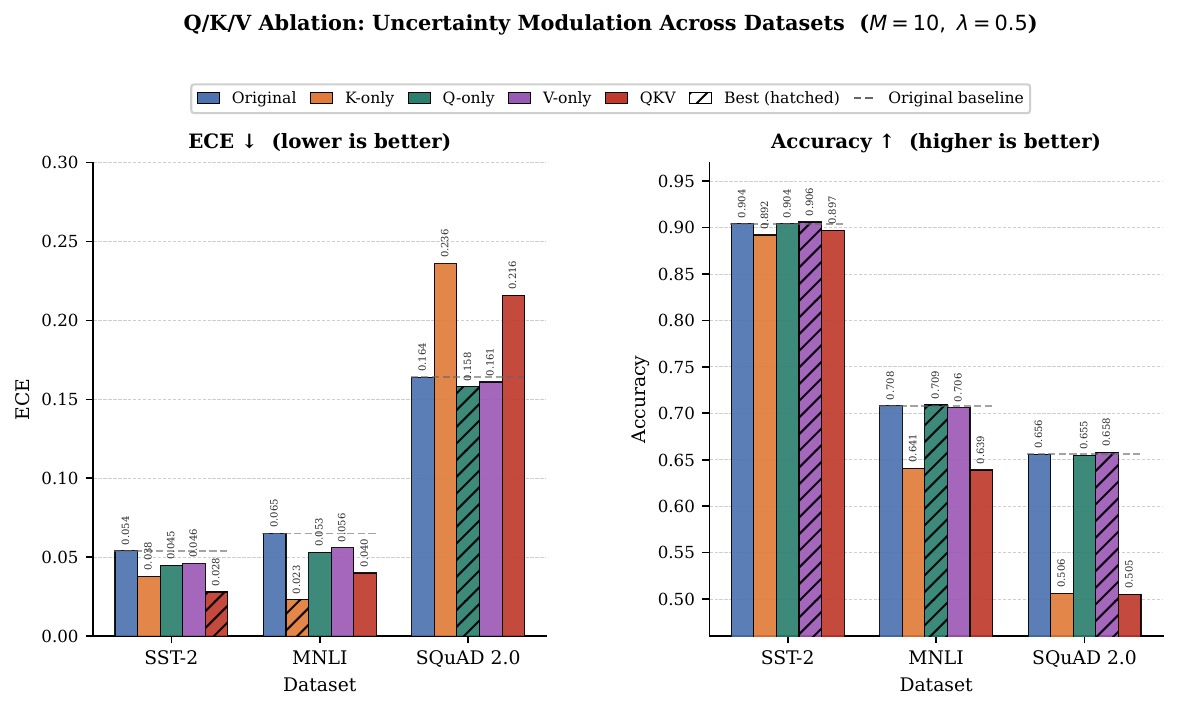}
    \caption{\textbf{Q/K/V ablation: where to apply uncertainty modulation} ($M{=}10$, $\lambda{=}0.5$).
    Top: ECE$\downarrow$; bottom: Accuracy$\uparrow$ on SST-2, MNLI, and SQuAD~2.0 for Original, Q-only, K-only, V-only, and QKV.
    K-only achieves the lowest ECE on SST-2 and MNLI but sharply reduces accuracy on SQuAD~2.0; Q-only provides the most consistent trade-off across tasks; V-only best preserves accuracy; and QKV can over-regularize.}
  \label{fig:qkv-ablation}
  \vspace{-2mm}
\end{figure*}






\subsection{Generalization Across BERT Models}

We evaluate generalization across \emph{BERT-based encoder architectures}
spanning general-purpose, biomedical, clinical, and scientific domains
(\cref{app:appendix_cross_model}).
Under matched evaluation protocols, \ours improves calibration across these
pretrained backbones, yielding an average relative ECE reduction of
approximately \textbf{32\%}.

The largest gains are observed for mid-sized and domain-specialized models, including
BERT-base on SQuAD, BioBERT on PubMedQA, and ClinicalBERT on MedQA, where epistemic
uncertainty associated with semantic ambiguity appears more prominent.
By contrast, improvements are smaller for models with stronger baseline calibration
(e.g., RoBERTa-base on MNLI) and diminish, or occasionally reverse, for very small
or heavily overparameterized models.

These results suggest that uncertainty-aware attention generalizes reliably
\emph{within the BERT encoder family}.
Extending this mechanism to fundamentally different architectures, such as
decoder-only or state-space models, remains an important direction for future work.

\paragraph{Additional analyses.}
Additional analyses are provided in \cref{app:appendix_results,app:appendix_cross_model,app:complexity-analysis}, including threshold-based selective prediction, OOD evaluations (e.g., HANS), global and document-level MC dropout diagnostics, component ablations, penalty-form ablations, aggregation and normalization diagnostics, scale and geometry diagnostics for token uncertainty, negation diagnostics with qualitative attention illustration, behavioral analysis by linguistic bucket on MedQA, sensitivity analyses for $\lambda$ and $M$, character-level adversarial robustness, full linguistic uncertainty analysis, cross-model and scaling analyses, and implementation, complexity, latency, and reproducibility details.

\section{Conclusion}
\label{sec:conclusion}

We introduced \ours, an inference-time framework that integrates epistemic uncertainty into transformer inference through uncertainty-weighted attention and a layer-wise variance diagnostic.
Across general NLP benchmarks, \ours improves calibration relative to an unscaled fine-tuned BERT baseline and yields more reliable selective prediction behavior under fixed confidence thresholds.
Under distribution shift (MNLI matched$\rightarrow$mismatched and OOD evaluation suites), uncertainty-aware attention provides benefits that are complementary to output-level calibration: temperature scaling remains the strongest post-hoc baseline for ECE reduction, whereas \ours targets internal evidence aggregation and composes cleanly with \TS when calibrated probabilities are required.
Although Monte Carlo sampling increases inference latency, \ours requires neither retraining nor modification of pretrained weights, providing a practical bridge between lightweight post-hoc calibration methods and higher-cost Bayesian or ensemble-based alternatives.

\section*{Limitations}
\label{sec:discussion}

\paragraph{Compute and latency.}
\ours introduces additional inference cost due to Monte Carlo dropout sampling.
Under a single canonical end-to-end GPU benchmark (batch size $32$, sequence length $128$, sequential MC loop on one GPU), the overhead is substantial (e.g., approximately $22.7\times$ at $M{=}10$), whereas temperature scaling adds negligible cost (approximately $1.0\times$).
As a result, \ours is not intended for latency-critical, always-on deployment.
Instead, it is better suited to offline or batch settings, or to selectively triggered use (e.g., enabling uncertainty-aware attention only for low-confidence or high-risk inputs), where improved confidence reliability and internal evidence control may justify the additional runtime.
Overall, \ours occupies an intermediate point between lightweight post-hoc calibration methods (e.g., \TS) and higher-cost uncertainty methods (e.g., deep ensembles), trading additional compute for internal uncertainty-aware control signals and diagnostic capability.

\paragraph{Compute-once token uncertainty (no per-layer recomputation).}
For efficiency and conceptual clarity, we compute token-level uncertainty once at the embedding stage (via $M$ stochastic embedding samples) and reuse this signal to modulate attention throughout the network; deeper layers are analyzed through variance attribution rather than by re-estimating $U(x_j)$ after each mixing operation.
Recomputing token uncertainty at intermediate layers is a potentially useful extension, but it would require defining token-level uncertainty after contextual mixing (e.g., residual connections and attention aggregation) and would further increase inference cost due to additional stochastic passes.
We leave per-layer recomputation, and its associated cost--benefit trade-offs, to future work, especially in settings where additional compute is acceptable.

\paragraph{Not a replacement for TS in-domain.}
Because temperature scaling is a strong lightweight calibrator for in-domain ECE, \ours should not be viewed as a universal substitute for \TS.
Our primary contribution is different: uncertainty-guided modulation of the \emph{internal evidence aggregation} process through attention at inference time.
Accordingly, we emphasize compositional use: \ours can be combined with \TS (by fitting $T$ on development data using MC-mean logits) when the goal is to retain internal uncertainty-aware behavior while further improving output calibration.

\paragraph{Clinical transfer is a stress test, not clinical validation.}
On clinical QA benchmarks (MedQA and PubMedQA), calibration gains can be modest, which may reflect dataset properties (e.g., a stronger emphasis on factual recall and less ambiguity in evidence aggregation) as well as evaluation constraints.
We therefore treat these experiments as domain-transfer stress tests rather than evidence of clinical readiness.
Any real-world clinical deployment would require task- and setting-specific validation, including broader coverage, robustness checks, and appropriate governance safeguards.

\bibliography{custom}

\begin{thebibliography}{33}
\providecommand{\natexlab}[1]{#1}

\bibitem[{Blundell et~al.(2015)Blundell, Cornebise, Kavukcuoglu, and Wierstra}]{blundell2015weight}
Charles Blundell, Julien Cornebise, Koray Kavukcuoglu, and Daan Wierstra. 2015.
\newblock Weight uncertainty in neural network.
\newblock In \emph{International conference on machine learning}, pages 1613--1622. PMLR.

\bibitem[{Brown et~al.(2020)Brown, Mann, Ryder, Subbiah, Kaplan, Dhariwal, Neelakantan, Shyam, Sastry, Askell et~al.}]{brown2020language}
Tom Brown, Benjamin Mann, Nick Ryder, Melanie Subbiah, Jared~D Kaplan, Prafulla Dhariwal, Arvind Neelakantan, Pranav Shyam, Girish Sastry, Amanda Askell, and 1 others. 2020.
\newblock Language models are few-shot learners.
\newblock \emph{Advances in neural information processing systems}, 33:1877--1901.

\bibitem[{Chen and Li(2023)}]{chen2023calibrating}
Wenlong Chen and Yingzhen Li. 2023.
\newblock Calibrating transformers via sparse gaussian processes.
\newblock \emph{arXiv preprint arXiv:2303.02444}.

\bibitem[{Desai and Durrett(2020)}]{desai2020calibration}
Shrey Desai and Greg Durrett. 2020.
\newblock Calibration of pre-trained transformers.
\newblock \emph{arXiv preprint arXiv:2003.07892}.

\bibitem[{Devlin et~al.(2019)Devlin, Chang, Lee, and Toutanova}]{devlin2019bert}
Jacob Devlin, Ming-Wei Chang, Kenton Lee, and Kristina Toutanova. 2019.
\newblock Bert: Pre-training of deep bidirectional transformers for language understanding.
\newblock In \emph{Proceedings of the 2019 conference of the North American chapter of the association for computational linguistics: human language technologies, volume 1 (long and short papers)}, pages 4171--4186.

\bibitem[{Fan et~al.(2020)Fan, Zhang, Chen, and Zhou}]{fan2020bayesian}
Xinjie Fan, Shujian Zhang, Bo~Chen, and Mingyuan Zhou. 2020.
\newblock Bayesian attention modules.
\newblock \emph{Advances in Neural Information Processing Systems}, 33:16362--16376.

\bibitem[{Gabetni et~al.(2025)Gabetni, Curci, Pilzer, Roy, Ricci, and Franchi}]{gabetni2025ensemblingprunedattentionheads}
Firas Gabetni, Giuseppe Curci, Andrea Pilzer, Subhankar Roy, Elisa Ricci, and Gianni Franchi. 2025.
\newblock \href {https://arxiv.org/abs/2510.18358} {Ensembling pruned attention heads for uncertainty-aware efficient transformers}.
\newblock \emph{Preprint}, arXiv:2510.18358.

\bibitem[{Gal and Ghahramani(2016)}]{gal2016dropout}
Yarin Gal and Zoubin Ghahramani. 2016.
\newblock Dropout as a bayesian approximation: Representing model uncertainty in deep learning.
\newblock In \emph{international conference on machine learning}, pages 1050--1059. PMLR.

\bibitem[{Goodfellow et~al.(2014)Goodfellow, Shlens, and Szegedy}]{goodfellow2014explaining}
Ian~J Goodfellow, Jonathon Shlens, and Christian Szegedy. 2014.
\newblock Explaining and harnessing adversarial examples.
\newblock \emph{arXiv preprint arXiv:1412.6572}.

\bibitem[{Guo et~al.(2017)Guo, Pleiss, Sun, and Weinberger}]{guo2017calibration}
Chuan Guo, Geoff Pleiss, Yu~Sun, and Kilian~Q Weinberger. 2017.
\newblock On calibration of modern neural networks.
\newblock In \emph{International conference on machine learning}, pages 1321--1330. PMLR.

\bibitem[{Huang et~al.(2017)Huang, Li, Pleiss, Liu, Hopcroft, and Weinberger}]{huang2017snapshot}
Gao Huang, Yixuan Li, Geoff Pleiss, Zhuang Liu, John~E Hopcroft, and Kilian~Q Weinberger. 2017.
\newblock Snapshot ensembles: Train 1, get m for free.
\newblock \emph{arXiv preprint arXiv:1704.00109}.

\bibitem[{Jin et~al.(2021)Jin, Pan, Oufattole, Weng, Fang, and Szolovits}]{jin2021disease}
Di~Jin, Eileen Pan, Nassim Oufattole, Wei-Hung Weng, Hanyi Fang, and Peter Szolovits. 2021.
\newblock What disease does this patient have? a large-scale open domain question answering dataset from medical exams.
\newblock \emph{Applied Sciences}.

\bibitem[{Jin et~al.(2019)Jin, Dhingra, Liu, Cohen, and Lu}]{jin2019pubmedqa}
Qiao Jin, Bhuwan Dhingra, Zhengping Liu, William Cohen, and Xinghua Lu. 2019.
\newblock Pubmedqa: A dataset for biomedical research question answering.
\newblock In \emph{Proceedings of the 2019 conference on empirical methods in natural language processing and the 9th international joint conference on natural language processing (EMNLP-IJCNLP)}, pages 2567--2577.

\bibitem[{Kendall and Gal(2017)}]{kendall2017uncertainties}
Alex Kendall and Yarin Gal. 2017.
\newblock What uncertainties do we need in bayesian deep learning for computer vision?
\newblock \emph{Advances in neural information processing systems}, 30.

\bibitem[{Ko et~al.(2024)Ko, Parkinson, Liu, and Wang}]{10.1093/bib/bbae359}
Young~Su Ko, Jonathan Parkinson, Cong Liu, and Wei Wang. 2024.
\newblock \href {https://doi.org/10.1093/bib/bbae359} {Tuna: an uncertainty-aware transformer model for sequence-based protein–protein interaction prediction}.
\newblock \emph{Briefings in Bioinformatics}, 25(5):bbae359.

\bibitem[{Kuhn et~al.(2023)Kuhn, Gal, and Farquhar}]{kuhn2023semantic}
Lorenz Kuhn, Yarin Gal, and Sebastian Farquhar. 2023.
\newblock Semantic uncertainty: Linguistic invariances for uncertainty estimation in natural language generation.
\newblock \emph{arXiv preprint arXiv:2302.09664}.

\bibitem[{Lakshminarayanan et~al.(2017)Lakshminarayanan, Pritzel, and Blundell}]{lakshminarayanan2017simple}
Balaji Lakshminarayanan, Alexander Pritzel, and Charles Blundell. 2017.
\newblock Simple and scalable predictive uncertainty estimation using deep ensembles.
\newblock \emph{Advances in neural information processing systems}, 30.

\bibitem[{Louizos and Welling(2017)}]{louizos2017multiplicative}
Christos Louizos and Max Welling. 2017.
\newblock Multiplicative normalizing flows for variational bayesian neural networks.
\newblock In \emph{International conference on machine learning}, pages 2218--2227. PMLR.

\bibitem[{MacKay(1992)}]{mackay1992practical}
David~JC MacKay. 1992.
\newblock A practical bayesian framework for backpropagation networks.
\newblock \emph{Neural computation}, 4(3):448--472.

\bibitem[{Neal(2012)}]{neal2012bayesian}
Radford~M Neal. 2012.
\newblock \emph{Bayesian learning for neural networks}, volume 118.
\newblock Springer Science \& Business Media.

\bibitem[{Platt et~al.(1999)}]{platt1999probabilistic}
John Platt and 1 others. 1999.
\newblock Probabilistic outputs for support vector machines and comparisons to regularized likelihood methods.
\newblock \emph{Advances in large margin classifiers}, 10(3):61--74.

\bibitem[{Rahmati et~al.(2025)Rahmati, Jantre, Zhang, Wang, Yoon, Urban, and Qian}]{rahmati2025c}
Amir~Hossein Rahmati, Sanket Jantre, Weifeng Zhang, Yucheng Wang, Byung-Jun Yoon, Nathan~M Urban, and Xiaoning Qian. 2025.
\newblock C-lora: Contextual low-rank adaptation for uncertainty estimation in large language models.
\newblock \emph{arXiv preprint arXiv:2505.17773}.

\bibitem[{Rajpurkar et~al.(2018)Rajpurkar, Jia, and Liang}]{rajpurkar2018know}
Pranav Rajpurkar, Robin Jia, and Percy Liang. 2018.
\newblock Know what you don't know: Unanswerable questions for squad.
\newblock \emph{arXiv preprint arXiv:1806.03822}.

\bibitem[{Ritter et~al.(2018)Ritter, Botev, and Barber}]{ritter2018scalable}
Hippolyt Ritter, Aleksandar Botev, and David Barber. 2018.
\newblock A scalable laplace approximation for neural networks.
\newblock In \emph{6th international conference on learning representations, ICLR 2018-conference track proceedings}, volume~6. International Conference on Representation Learning.

\bibitem[{Socher et~al.(2013)Socher, Perelygin, Wu, Chuang, Manning, Ng, and Potts}]{socher-etal-2013-recursive}
Richard Socher, Alex Perelygin, Jean Wu, Jason Chuang, Christopher~D. Manning, Andrew Ng, and Christopher Potts. 2013.
\newblock \href {https://aclanthology.org/D13-1170/} {Recursive deep models for semantic compositionality over a sentiment treebank}.
\newblock In \emph{Proceedings of the 2013 Conference on Empirical Methods in Natural Language Processing}, pages 1631--1642, Seattle, Washington, USA. Association for Computational Linguistics.

\bibitem[{Talman et~al.(2023)Talman, Celikkanat, Virpioja, Heinonen, and Tiedemann}]{talman2023uncertainty}
Aarne Talman, Hande Celikkanat, Sami Virpioja, Markus Heinonen, and J{\"o}rg Tiedemann. 2023.
\newblock Uncertainty-aware natural language inference with stochastic weight averaging.
\newblock \emph{arXiv preprint arXiv:2304.04726}.

\bibitem[{Vazhentsev et~al.(2025)Vazhentsev, Rvanova, Kuzmin, Fadeeva, Lazichny, Panchenko, Panov, Baldwin, Sachan, Nakov et~al.}]{vazhentsev2025uncertainty}
Artem Vazhentsev, Lyudmila Rvanova, Gleb Kuzmin, Ekaterina Fadeeva, Ivan Lazichny, Alexander Panchenko, Maxim Panov, Timothy Baldwin, Mrinmaya Sachan, Preslav Nakov, and 1 others. 2025.
\newblock Uncertainty-aware attention heads: Efficient unsupervised uncertainty quantification for llms.
\newblock \emph{arXiv preprint arXiv:2505.20045}.

\bibitem[{Wang et~al.(2024)Wang, Zheng, Ding, Zhang, Lin, and Tao}]{wang2024uncertainty}
Yikun Wang, Rui Zheng, Liang Ding, Qi~Zhang, Dahua Lin, and Dacheng Tao. 2024.
\newblock Uncertainty aware learning for language model alignment.
\newblock \emph{arXiv preprint arXiv:2406.04854}.

\bibitem[{Williams et~al.(2018)Williams, Nangia, and Bowman}]{williams2018broad}
Adina Williams, Nikita Nangia, and Samuel Bowman. 2018.
\newblock A broad-coverage challenge corpus for sentence understanding through inference.
\newblock In \emph{Proceedings of the 2018 conference of the North American chapter of the association for computational linguistics: human language technologies, volume 1 (long papers)}, pages 1112--1122.

\bibitem[{Xue et~al.(2021)Xue, Yu, Xu, Liu, Hu, Ye, Geng, Liu, and Meng}]{xue2021bayesian}
Boyang Xue, Jianwei Yu, Junhao Xu, Shansong Liu, Shoukang Hu, Zi~Ye, Mengzhe Geng, Xunying Liu, and Helen Meng. 2021.
\newblock Bayesian transformer language models for speech recognition.
\newblock In \emph{ICASSP 2021-2021 IEEE International Conference on Acoustics, Speech and Signal Processing (ICASSP)}, pages 7378--7382. IEEE.

\bibitem[{Zabolotnyi et~al.(2025)Zabolotnyi, Makarov, Mitrovic, Proskura, Travkin, Alferov, and Zaytsev}]{zabolotnyi2025adue}
Artem Zabolotnyi, Roman Makarov, Mile Mitrovic, Polina Proskura, Oleg Travkin, Roman Alferov, and Alexey Zaytsev. 2025.
\newblock Adue: Improving uncertainty estimation head for lora adapters in llms.
\newblock \emph{arXiv preprint arXiv:2505.15443}.

\bibitem[{Zadrozny and Elkan(2001)}]{zadrozny2001obtaining}
Bianca Zadrozny and Charles Elkan. 2001.
\newblock Obtaining calibrated probability estimates from decision trees and naive bayesian classifiers.
\newblock In \emph{Icml}, volume~1.

\bibitem[{Zhang et~al.(2021)Zhang, Fan, Chen, and Zhou}]{zhang2021bayesian}
Shujian Zhang, Xinjie Fan, Bo~Chen, and Mingyuan Zhou. 2021.
\newblock Bayesian attention belief networks.
\newblock In \emph{International Conference on Machine Learning}, pages 12413--12426. PMLR.

\end{thebibliography}

\newpage

\appendix
\onecolumn
\section*{Appendix}

This supplementary material provides additional experimental results, methodological details, and reproducibility information supporting the main paper.

\begin{itemize}

    \item \textbf{Appendix~\ref{app:mathematical-details}} presents the mathematical foundations of \ours,
    including the diagnostic layer-wise variance decomposition, its derivation, and the Bayesian interpretation of MC dropout.

    \item \textbf{Appendix~\ref{app:algorithm}} provides the inference-time algorithm for uncertainty-aware attention.

    \item \textbf{Appendix~\ref{app:baseline-metrics}} details baselines, evaluation metrics (ECE, robustness metrics, selective prediction),
    and reporting conventions (seeds, fairness, confidence definitions).

    \item \textbf{Appendix~\ref{app:appendix_results}} reports additional experimental analyses:
    \begin{itemize}
        \item Threshold-based selective prediction (MNLI, SQuAD): \cref{app:mnli_squad_thresh,tab:5n2p-thresh}.
        \item OOD threshold-based selective prediction (HANS/ANLI/SNLI/IMDb): \cref{app:ood_thresh} and Table~\ref{tab:5n2p-ood-thresh}.
        \item Global/document-level MC Dropout diagnostics: \cref{app:global_mc} and Fig.\ref{fig:mc_diagnostics_heatmap}.
        \item HANS per-heuristic OOD breakdown: \cref{app:hans_heuristics,tab:5n2p-hans-heur}.
        \item Component ablations of uncertainty injection: \cref{app:component_ablation,fig:ablation}.
        \item Aggregation and normalization diagnostics (MNLI): \cref{app:agg_ln_diagnostics,tab:wxzc_checks_merged}.
        \item Scale and geometry diagnostics for token uncertainty: \cref{app:scale-geometry} and Table~\ref{tab:rescaling}, \ref{tab:corr} and \ref{tab:variants}.
        \item Negation diagnostics and qualitative attention illustration: \cref{app:negation,tab:negation,fig:uncertainty_attention}.
        \item Behavioral analysis by linguistic bucket: \cref{app:behavior_medqa} and Table~\ref{tab:behavior_medqa}.
        
        \item Sensitivity to $\lambda$ and $M$: \cref{app:sensitivity,tab:sensitivity}.
        \item Character-level adversarial robustness: \cref{app:appendix_adversarial,tab:adversarial}.
        \item Fine-grained linguistic uncertainty probes: \cref{app:full_linguistic_uncertainty,tab:linguistic}.
    \end{itemize}

    \item \textbf{Appendix~\ref{app:appendix_cross_model}} evaluates generalization across BERT-family backbones and model sizes,
    including cross-domain calibration and scaling effects.

    \item \textbf{Appendix~\ref{app:complexity-analysis}} documents implementation details, compute/latency overhead,
    hardware setup, and code availability for reproducibility.

    \item \textbf{Appendix~\ref{app:related}} provides an extended related work discussion.

    \item \textbf{Appendix~\ref{app:ai-usage}} provides an AI usage statement.
\end{itemize}

\newpage

\section{Mathematical Details}
\label{app:mathematical-details}

In this section, we present the formal mathematical foundations underlying the
uncertainty modeling and diagnostic analyses used in \ours.
We first introduce a layer-wise variance decomposition that attributes predictive
uncertainty across transformer depth under stochastic inference.
We then provide a proof of the decomposition, clarify its assumptions, and relate
Monte Carlo dropout to approximate Bayesian inference.
Finally, we discuss the practical and interpretive implications of layer-wise
uncertainty attribution for calibration, intervention, and uncertainty-aware
inference.
These analyses serve as diagnostic and explanatory tools that complement the
empirical results in the main paper, rather than as claims of exact posterior
inference.

\subsection{Layer-Wise Uncertainty Decomposition}
\label{app:layer-wise-uq-decomp}

To better understand how predictive uncertainty arises within the transformer,
we analyze its distribution across layers under stochastic inference.

\paragraph{Normalized variance contributions.}
To facilitate interpretability and ensure internal consistency, we report
\emph{normalized layer-wise variance contributions}, defined as
\begin{equation}
\tilde{\mathcal{V}}^{(l)} =
\frac{\mathcal{V}^{(l)}}{\sum_{k=1}^{L} \mathcal{V}^{(k)}},
\end{equation}
such that contributions across layers sum to $100\%$.
All percentages reported below correspond to these normalized values.

Table~\ref{tab:layer_uncertainty} presents representative layer-wise uncertainty
decompositions for answerable and unanswerable SQuAD examples, illustrating how
predictive uncertainty is distributed across transformer depth rather than
appearing only at the output layer.
We emphasize that these examples are illustrative diagnostics rather than
statistical summaries and are intended to provide qualitative insight into
uncertainty propagation rather than population-level guarantees.

For the \textbf{unanswerable example}, uncertainty grows progressively through
the self-attention stack.
While embedding-level contributions remain modest
(3.6\%), uncertainty increases substantially in mid and late layers, with the
largest contribution arising from layers 9--11 (12.5\%).
This pattern is
consistent with uncertainty emerging during later reasoning stages when
sufficient evidence is unavailable, rather than arising only at the final
decision layer.

In contrast, the \textbf{answerable example} exhibits a similar normalized
uncertainty distribution but a different internal structure.
Mid-layer
attention contributes more strongly to evidence aggregation (8.7\%), while
late-layer uncertainty reflects confident task-level reasoning rather than
unresolved ambiguity.

These cases illustrate that aggregate predictive variance alone is insufficient
to characterize model reliability.
Instead, the \emph{distribution of uncertainty
across layers} provides insight into whether uncertainty arises from ambiguous
inputs or from higher-level reasoning processes.

\begin{table*}[t]
\centering
\small
\renewcommand{\arraystretch}{1.15}
\setlength{\tabcolsep}{6pt}

\begin{tabular}{l l c c l}
\toprule

\rowcolor{tblLDHead}
\textbf{Layer} & \textbf{Component} & \textbf{Normalized Var.} & \textbf{\%} & \textbf{Interpretation} \\
\midrule

\rowcolor{tblLDUn}
\multicolumn{5}{c}{\textbf{SQuAD Unanswerable}} \\
\rowcolor{tblLDUn}\addlinespace[0.3em]

\rowcolor{tblLDUn}
Layer 0     & Embedding         & 0.036 & 3.6\%  & Word ambiguity \\
\rowcolor{tblLDUn}
Layer 1--4  & Attention (early) & 0.054 & 5.4\%  & Context formation \\
\rowcolor{tblLDUn}
Layer 5--8  & Attention (mid)   & 0.088 & 8.8\%  & Escalating uncertainty \\
\rowcolor{tblLDUn}
Layer 9--11 & Attention (late)  & 0.125 & 12.5\% & Uncertainty consolidation \\
\rowcolor{tblLDUn}
Layer 12    & Output            & 0.021 & 2.1\%  & Final decision \\

\rowcolor{tblLDUn}\addlinespace[0.3em]
\rowcolor{tblLDUnTot}
\textbf{Total} &  & \textbf{1.000} & \textbf{100\%} & Correctly uncertain \\
\midrule

\rowcolor{tblLDAns}
\multicolumn{5}{c}{\textbf{SQuAD Answerable}} \\
\rowcolor{tblLDAns}\addlinespace[0.3em]

\rowcolor{tblLDAns}
Layer 0     & Embedding         & 0.037 & 3.7\%  & Clear input \\
\rowcolor{tblLDAns}
Layer 1--4  & Attention (early) & 0.054 & 5.4\%  & Evidence aggregation \\
\rowcolor{tblLDAns}
Layer 5--8  & Attention (mid)   & 0.087 & 8.7\%  & Confidence reinforcement \\
\rowcolor{tblLDAns}
Layer 9--11 & Attention (late)  & 0.126 & 12.6\% & Confident reasoning \\
\rowcolor{tblLDAns}
Layer 12    & Output            & 0.023 & 2.3\%  & Final decision \\

\rowcolor{tblLDAns}\addlinespace[0.3em]
\rowcolor{tblLDAnsTot}
\textbf{Total} &  & \textbf{1.000} & \textbf{100\%} & Confident prediction \\

\bottomrule
\end{tabular}

\caption{Normalized layer-wise uncertainty decomposition for representative
answerable and unanswerable SQuAD examples (single seed).
Percentages indicate the relative contributions of each component to total predictive uncertainty.}
\label{tab:layer_uncertainty}
\end{table*}

\subsection{Derivation of Proposition~\ref{thm:layer_variance}}
\label{app:appendix-proof}

\begin{proof}
We derive a \emph{diagnostic} layer-wise variance decomposition by recursively
applying the law of total variance across transformer depth.
The purpose of this derivation is interpretive:
it provides a layer-indexed summary of where predictive variance arises
or amplifies during stochastic inference.

\paragraph{Setup and notation.}
Let $\hat{y}$ denote the model's final prediction (e.g., a class probability or logit),
and let $\mathbf{h}^{(l)} \in \mathbb{R}^{d}$ denote the hidden representation at
layer $l \in \{0, \ldots, L\}$ corresponding to the \texttt{[CLS]} token.
Stochasticity is introduced through Monte Carlo dropout at inference time.

Each transformer layer implements a stochastic mapping
\begin{equation}
\mathbf{h}^{(l)} = f^{(l)}\!\left(\mathbf{h}^{(l-1)}, \epsilon^{(l)}\right),
\qquad
\epsilon^{(l)} \sim \mathcal{D}^{(l)},
\end{equation}
where $\epsilon^{(l)}$ denotes dropout-induced noise at layer $l$.
We assume finite second moments so that all conditional variances below are well defined.

\paragraph{Recursive application of total variance.}
Applying the law of total variance at the final layer gives
\begin{equation}
\begin{aligned}
\operatorname{Var}(\hat{y})
&=
\mathbb{E}_{\mathbf{h}^{(L-1)}}
\!\left[
\operatorname{Var}\!\left(
\hat{y} \mid \mathbf{h}^{(L-1)}
\right)
\right] \\
&\quad+
\operatorname{Var}_{\mathbf{h}^{(L-1)}}
\!\left(
\mathbb{E}\!\left[
\hat{y} \mid \mathbf{h}^{(L-1)}
\right]
\right).
\end{aligned}
\end{equation}
The same identity can then be applied recursively to the second term by conditioning
on progressively earlier hidden representations
$\mathbf{h}^{(L-2)}, \mathbf{h}^{(L-3)}, \ldots, \mathbf{h}^{(0)}$.
This yields a recursive decomposition of predictive variance across depth.

\paragraph{Diagnostic layer-wise contributions.}
Motivated by this recursion, we define the layer-wise diagnostic contribution of layer $l$ as
\begin{equation}
\mathcal{V}^{(l)}
\triangleq
\mathbb{E}_{\mathbf{h}^{(l-1)}}
\left[
\operatorname{Var}_{\epsilon^{(l)}}
\!\left(
\hat{y} \mid \mathbf{h}^{(l-1)}
\right)
\right].
\end{equation}
These quantities summarize the predictive variability associated with the stochasticity
introduced at each layer, conditioned on the representation propagated from preceding layers.

\paragraph{Interpretation.}
Under this construction, predictive variance admits a recursive layer-indexed decomposition,
and the normalized values
\[
\tilde{\mathcal{V}}^{(l)}
=
\frac{\mathcal{V}^{(l)}}{\sum_{k=1}^{L}\mathcal{V}^{(k)}}
\]
provide an interpretable summary of how uncertainty is distributed across depth.
We emphasize that these contributions are intended as \emph{diagnostic attributions}:
they are useful for identifying where uncertainty concentrates or amplifies during inference,
but they should not be interpreted as unique causal effects or as a novel variance identity.
\end{proof}

\subsection{Bayesian Interpretation of MC Dropout}
\label{app:bayesian-interpretation}

Monte Carlo dropout provides a computationally efficient approximation to
Bayesian inference in deep neural networks \cite{gal2016dropout}.
Rather than maintaining a full posterior distribution over model parameters,
dropout induces stochasticity that can be interpreted as sampling from an
implicit variational posterior under standard assumptions.

\paragraph{Bayesian predictive distribution.}
In Bayesian inference, predictions marginalize over uncertainty in model
parameters:
\begin{equation}
p(y \mid \mathbf{x}, \mathcal{D})
=
\int p(y \mid \mathbf{x}, \theta)\, p(\theta \mid \mathcal{D}) \, d\theta.
\end{equation}

\paragraph{Monte Carlo approximation via dropout.}
MC dropout approximates this marginalization by retaining dropout during
inference and performing $M$ stochastic forward passes:
\begin{equation}
p(y \mid \mathbf{x}, \mathcal{D})
\approx
\frac{1}{M}
\sum_{m=1}^{M}
p(y \mid \mathbf{x}, \theta_m).
\end{equation}

\paragraph{Uncertainty interpretation.}
Variability across predictions reflects epistemic uncertainty arising from
model and data limitations and is, in principle, reducible with additional information.

\paragraph{Relevance to \ours.}
In \ours, these uncertainty estimates are explicitly
propagated through attention mechanisms, enabling uncertainty-aware inference
without modifying training objectives or learned parameters.

\subsection{Implications of Layer-Wise Uncertainty}
\label{app:layerwise-implications}

\paragraph{Attribution.}
Decomposing predictive variance across transformer depth enables identification
of where uncertainty is amplified during inference.
As shown in Table~\ref{tab:layer_uncertainty}, unanswerable SQuAD examples
concentrate substantial uncertainty in late layers (9--11), consistent with
ambiguity emerging during final reasoning stages.
Answerable examples exhibit comparable late-layer contributions, but the
uncertainty reflects confident task-level reasoning rather than unresolved
ambiguity.

\paragraph{Intervention.}
Layer-wise attribution suggests that calibration strategies could, in principle,
target high-variance components.
Our ablation study (\cref{fig:ablation}) shows that uncertainty introduced
at the attention level yields a \textbf{21.1\%} improvement in calibration, while
embedding-level uncertainty contributes negatively (\textbf{-3.1\%}),
supporting selective intervention rather than uniform stochasticity.

\paragraph{Interpretability.}
Unlike post-hoc calibration methods that produce a single uncertainty score,
this decomposition reveals how uncertainty evolves during reasoning.
For complex syntactic constructions, uncertainty is elevated in early and mid
layers due to structural processing, while late layers consolidate the final
decision.

\paragraph{Scope and limitation (per-layer recomputation).}
While we analyze \emph{where} uncertainty concentrates across depth via layer-wise variance attribution (Table~\ref{tab:layer_uncertainty}),
our \emph{UAT-LITE intervention} computes token uncertainty once (from stochastic forward passes) and reuses it across layers for efficiency.
Recomputing token uncertainty after each layer would require defining uncertainty over mixed representations after contextualization and would
increase inference cost roughly in proportion to the number of layers; we leave such layer-wise recomputation to future work.

\subsection{Practical Inference-Time Procedure}
\label{app:practical-instantiation}

At test time, Monte Carlo dropout produces multiple stochastic forward passes of
the same pretrained transformer.
These samples yield a predictive mean and
variance.
The mean is used as the final prediction, while the variance
quantifies epistemic uncertainty.

Uncertainty estimates influence inference by modulating attention under
representation ambiguity and enabling confidence-aware decision shaping at the
output level.
All uncertainty mechanisms operate strictly at inference time and
do not modify training dynamics or learned parameters.

\section{Inference-Time Algorithm}
\label{app:algorithm}

\paragraph{Inference-time algorithm.}
Algorithm~\ref{alg:uat-lite} summarizes the inference-time procedure used by \ours.
Uncertainty estimation and prediction are performed using the same $M$ stochastic
forward passes, without any separate embedding-only sampling stage.

\begin{algorithm}[t]
\caption{Inference-Time Uncertainty-Aware Attention (\ours)}
\label{alg:uat-lite}
\begin{algorithmic}[1]
\REQUIRE Input tokens $\mathbf{x}$, Monte Carlo budget $M$, uncertainty penalty $\lambda$
\STATE Initialize running statistics for token embeddings (per token $x_j$), with $\hat{U}^{(0)}(x_j)=0$
\FOR{$m = 1$ to $M$}
    \STATE Sample dropout masks
    \STATE Compute token embeddings $\mathbf{z}^{(m)}$
    \STATE Apply uncertainty-weighted attention using the lagged estimate $\hat{U}^{(m-1)}(x_j)$
    \STATE Compute logits $\ell^{(m)}$
    \STATE Update running estimates of token-level uncertainty $\hat{U}^{(m)}(x_j)$ from $\mathbf{z}^{(m)}$
\ENDFOR
\STATE Aggregate logits $\{\ell^{(m)}\}_{m=1}^M$ to obtain the predictive mean
\RETURN Prediction and uncertainty estimates
\end{algorithmic}
\end{algorithm}

\section{Baselines and Metrics}
\label{app:baseline-metrics}

We compare against BERT-base (110M parameters), MC Dropout ($p=0.1$, $M=5$), \TS, and Deep Ensembles (5 independently fine-tuned models).
\TS is treated as a strong post-hoc calibration baseline that
operates purely at the output level and does not modify internal model computation.
Because TS rescales logits without affecting representations, it is compatible
with uncertainty-aware inference and can be applied on top of \ours.
Accordingly, we report both \TS\ as a standalone baseline and the stacked variant
\ours+\TS\ to assess composability.
For \ours+\TS, the temperature $T$ is fit on the development set using the
MC-mean logits produced by \ours\ and applied at test time.
To isolate the effect of uncertainty-aware internal computation,
we compare \ours\ against non-stacked baselines (e.g., Base, MC Dropout, and \TS),
and treat \ours+\TS\ as an explicit compositional setting rather than the
core mechanism itself.
All baselines use the same pretrained backbone, fine-tuning recipe, and evaluation
code unless otherwise stated.

\paragraph{Bayesian baselines beyond inference-time methods.}
Methods such as SWAG and Laplace approximations estimate parameter-space uncertainty
through task-specific posterior fitting or fine-tuning \citep{talman2023uncertainty,ritter2018scalable}.
Because our scope is \emph{inference-time calibration} of pretrained transformers
without additional training or posterior fitting, we do not evaluate these approaches
as direct baselines.
We discuss them in Related Work (\cref{app:related}) as complementary directions.

\paragraph{Expected Calibration Error (ECE).}
Calibration is measured using Expected Calibration Error (ECE):
\begin{equation}
\label{eq:ece}
\mathrm{ECE}
=
\sum_{k=1}^{K}
\frac{|B_k|}{N}
\left|
\mathrm{acc}(B_k) - \mathrm{conf}(B_k)
\right|,
\end{equation}
where predictions are partitioned into $K=15$ \emph{fixed-width confidence bins}
$\{B_k\}$ over $[0,1]$.
For each bin, $\mathrm{acc}(B_k)$ denotes empirical accuracy, and
$\mathrm{conf}(B_k)$ denotes mean predicted confidence.
Here, $K$ denotes the number of confidence bins used for calibration evaluation,
which is distinct from the number of Monte Carlo samples $M$ used for uncertainty
estimation.
Lower ECE indicates better calibration.

\paragraph{Confidence definition.}
For classification tasks (MNLI, SST-2, MedQA), confidence is defined as the
maximum softmax probability of the predicted class, computed after aggregating
Monte Carlo samples.
For SQuAD~2.0, confidence corresponds to the predicted answerability probability
(unanswerable vs.\ answerable), following standard practice in uncertainty-aware
question answering.

\paragraph{SQuAD 2.0 evaluation protocol.}
We evaluate SQuAD~2.0 on the answerability prediction task, in which the model determines whether a question is answerable given the context.
This binary classification setting is well suited to calibration analysis.
We report answerability prediction accuracy alongside ECE in Table~\ref{tab:main_results_with_clinical}.
Span-level extraction (predicting answer start and end positions) is a separate task and is orthogonal to our uncertainty calibration contribution.
Our focus is on calibrated confidence scores for the answerability decision rather than on optimizing span extraction performance.

\paragraph{Distribution shift robustness metrics.}
To assess calibration stability under distributional shift, we evaluate models
on both in-domain (ID) and out-of-distribution (OOD) data and report two
complementary metrics.
First, we measure calibration drift:
\begin{equation}
\Delta\mathrm{ECE}
=
\mathrm{ECE}_{\text{OOD}} - \mathrm{ECE}_{\text{ID}},
\end{equation}
where smaller (or negative) values indicate greater robustness under shift.
Second, we report a calibration robustness score defined as the average ECE across
ID and OOD settings:
\begin{equation}
\mathrm{Robustness}
=
\frac{1}{2}
\left(
\mathrm{ECE}_{\text{ID}} + \mathrm{ECE}_{\text{OOD}}
\right),
\end{equation}
with lower values indicating more stable calibration across domains.

\paragraph{Selective prediction metrics.}
We additionally report selective coverage at fixed confidence thresholds and the
Area Under the Risk--Coverage Curve (AURC; lower is better).
Confidence thresholds are selected on the validation set \emph{for each method}
and then held fixed at test time.
We also report admitted counts ($N@\tau$) at fixed thresholds to make score-rescaling effects
(e.g., under TS) explicit.

\paragraph{MC Dropout baseline configuration.}
The MC Dropout baseline uses a uniform dropout rate of $p=0.1$ across all layers,
consistent with prior calibration studies.
Although more aggressive or component-specific dropout tuning could further improve
MC Dropout performance, such tuning increases implementation complexity and
reduces comparability with standard baselines.
In contrast, \ours explicitly leverages structured uncertainty
within the attention mechanism rather than relying solely on increased
stochasticity.
This comparison isolates the effect of uncertainty-aware attention rather than
raw dropout intensity.

\paragraph{Component-specific MC Dropout control.}
To disentangle the effect of structured stochasticity from uncertainty-aware
attention, we additionally evaluate an MC Dropout baseline using the same
component-specific dropout rates as \ours
(0.1 embeddings / 0.2 attention / 0.3 feed-forward), but \emph{without} attention
modulation.
As shown in Table~\ref{tab:main_results_with_clinical}, this control does not improve calibration
relative to uniform MC Dropout or the BERT baseline, indicating that the gains of
\ours arise from uncertainty-weighted attention rather than
tuned dropout alone.

\paragraph{Reporting conventions and fairness.}
Within each table, all methods are evaluated using the same data splits, evaluation code,
confidence definitions, and Monte Carlo budget $M$ (where applicable).
ECE is computed with $K=15$ fixed-width bins for all methods.
Temperature scaling (and \ours+TS) fits a single scalar temperature on
the validation set only and is applied at test time without access to test labels.
OOD calibration metrics (e.g., $\Delta$ECE and Robustness) are computed using the same
pipeline on ID and OOD sets to avoid binning or confidence-definition mismatch.

\paragraph{Random seeds and variability.}
\textbf{Main-paper results} (general calibration, distribution-shift robustness, and selective
prediction) are reported as mean $\pm$ standard deviation over five random seeds.
\textbf{Appendix diagnostic analyses}, including qualitative attention visualizations,
representative layer-wise uncertainty decompositions, and linguistic or adversarial
case studies, are reported using a single seed to reduce computational cost and are
explicitly labeled as \emph{single-seed} in the corresponding captions.
Deterministic baselines exhibit negligible variance because inference is fully deterministic
when evaluated from a fixed fine-tuned checkpoint reused across seeds.

\section{Additional Experimental Results}
\label{app:appendix_results}

This section presents additional analyses that complement the main experimental
results by providing finer-grained insight into the behavior of \ours.
Specifically, we examine selective prediction under fixed thresholds, calibration behavior under stronger OOD shifts,
and settings that reveal how uncertainty-aware attention responds to ambiguity,
structure, and semantic inconsistency.
Together, these analyses provide a more nuanced view of the strengths and
limitations of the proposed framework beyond aggregate calibration metrics.

\subsection{Additional Threshold-Based Selective Prediction Results}
\label{app:mnli_squad_thresh}

In addition to the broader OOD suites reported in the main text, we provide
threshold-based selective prediction results on MNLI (N=4{,}908) and SQuAD~2.0
(N=5{,}937) using the same evaluation protocol.
Confidence is defined as $s(x) = \max_y p_\theta(y \mid x)$, and for thresholds
$\tau \in \{0.9, 0.8, 0.7\}$ we report coverage (Cov@$\tau$), admitted accuracy
(Acc@$\tau$), admitted counts ($N@\tau$), and AURC.

Table~\ref{tab:5n2p-thresh} serves two purposes.
First, it shows that the selective prediction behavior observed under stronger OOD
conditions also appears in standard evaluation settings.
Second, it clarifies how post-hoc temperature scaling (TS) changes the admitted
fraction at fixed thresholds.
In particular, TS sharply compresses high-confidence regions
(e.g., Cov@0.9 on SQuAD), while maintaining high admitted accuracy.
This behavior reflects output probability rescaling rather than structural changes
in evidence aggregation.

By contrast, UAT-LITE modifies confidence through uncertainty-aware attention
modulation.
As a result, changes in coverage reflect altered internal evidence weighting rather
than purely scalar logit rescaling.
These results complement the OOD analysis in the main text by showing the same
protocol under milder distribution conditions.

\begin{table*}[t]
\centering
\setlength{\tabcolsep}{3pt}
\renewcommand{\arraystretch}{1.05}
\scriptsize

\begin{adjustbox}{max width=\linewidth, center}
\begin{tabular}{llccc ccc ccc c}
\toprule

\rowcolor{tblSPHeader}
\textbf{Dataset} & \textbf{Method} &
\multicolumn{3}{c}{$\tau=0.9$} &
\multicolumn{3}{c}{$\tau=0.8$} &
\multicolumn{3}{c}{$\tau=0.7$} &
\textbf{AURC$\downarrow$} \\

\cmidrule(lr){3-5}\cmidrule(lr){6-8}\cmidrule(lr){9-11}

\rowcolor{tblSPHeader}
& &
\multicolumn{1}{>{\columncolor{tblTauNine}}c}{\textbf{Cov}} &
\multicolumn{1}{>{\columncolor{tblTauNine}}c}{\textbf{Acc}} &
\multicolumn{1}{>{\columncolor{tblTauNine}}c}{\textbf{$N$}} &
\multicolumn{1}{>{\columncolor{tblTauEight}}c}{\textbf{Cov}} &
\multicolumn{1}{>{\columncolor{tblTauEight}}c}{\textbf{Acc}} &
\multicolumn{1}{>{\columncolor{tblTauEight}}c}{\textbf{$N$}} &
\multicolumn{1}{>{\columncolor{tblTauSeven}}c}{\textbf{Cov}} &
\multicolumn{1}{>{\columncolor{tblTauSeven}}c}{\textbf{Acc}} &
\multicolumn{1}{>{\columncolor{tblTauSeven}}c}{\textbf{$N$}} &
\\
\midrule

\multirow{5}{*}{MNLI}
& Baseline
\cellcolor{tblTauNine}\; & \cellcolor{tblTauNine}0.3857 & \cellcolor{tblTauNine}0.8822 & \cellcolor{tblTauNine}1893
\cellcolor{tblTauEight}\; & \cellcolor{tblTauEight}0.5591 & \cellcolor{tblTauEight}0.8455 & \cellcolor{tblTauEight}2744
\cellcolor{tblTauSeven}\; & \cellcolor{tblTauSeven}0.6999 & \cellcolor{tblTauSeven}0.8032 & \cellcolor{tblTauSeven}3435
& 0.1446 \\

& Base+TS
\cellcolor{tblTauNine}\; & \cellcolor{tblTauNine}0.1795 & \cellcolor{tblTauNine}0.9432 & \cellcolor{tblTauNine}881
\cellcolor{tblTauEight}\; & \cellcolor{tblTauEight}0.3820 & \cellcolor{tblTauEight}0.8853 & \cellcolor{tblTauEight}1875
\cellcolor{tblTauSeven}\; & \cellcolor{tblTauSeven}0.5515 & \cellcolor{tblTauSeven}0.8493 & \cellcolor{tblTauSeven}2707
& 0.1441 \\

& MC Dropout (M=10)
\cellcolor{tblTauNine}\; & \cellcolor{tblTauNine}0.3048 & \cellcolor{tblTauNine}0.8984 & \cellcolor{tblTauNine}1496
\cellcolor{tblTauEight}\; & \cellcolor{tblTauEight}0.4890 & \cellcolor{tblTauEight}0.8596 & \cellcolor{tblTauEight}2400
\cellcolor{tblTauSeven}\; & \cellcolor{tblTauSeven}0.6447 & \cellcolor{tblTauSeven}0.8186 & \cellcolor{tblTauSeven}3164
& 0.1478 \\

& UAT-LITE (M=10)
\cellcolor{tblTauNine}\; & \cellcolor{tblTauNine}0.3195 & \cellcolor{tblTauNine}0.8929 & \cellcolor{tblTauNine}1568
\cellcolor{tblTauEight}\; & \cellcolor{tblTauEight}0.5026 & \cellcolor{tblTauEight}0.8541 & \cellcolor{tblTauEight}2467
\cellcolor{tblTauSeven}\; & \cellcolor{tblTauSeven}0.6524 & \cellcolor{tblTauSeven}0.8154 & \cellcolor{tblTauSeven}3202
& 0.1483 \\

& UAT-LITE+TS
\cellcolor{tblTauNine}\; & \cellcolor{tblTauNine}0.1746 & \cellcolor{tblTauNine}0.9428 & \cellcolor{tblTauNine}857
\cellcolor{tblTauEight}\; & \cellcolor{tblTauEight}0.3714 & \cellcolor{tblTauEight}0.8826 & \cellcolor{tblTauEight}1823
\cellcolor{tblTauSeven}\; & \cellcolor{tblTauSeven}0.5383 & \cellcolor{tblTauSeven}0.8471 & \cellcolor{tblTauSeven}2642
& 0.1479 \\

\midrule

\multirow{5}{*}{SQuAD 2.0}
& Baseline
\cellcolor{tblTauNine}\; & \cellcolor{tblTauNine}0.4745 & \cellcolor{tblTauNine}0.7000 & \cellcolor{tblTauNine}2817
\cellcolor{tblTauEight}\; & \cellcolor{tblTauEight}0.6468 & \cellcolor{tblTauEight}0.6974 & \cellcolor{tblTauEight}3840
\cellcolor{tblTauSeven}\; & \cellcolor{tblTauSeven}0.7780 & \cellcolor{tblTauSeven}0.6850 & \cellcolor{tblTauSeven}4619
& 0.3186 \\

& Base+TS
\cellcolor{tblTauNine}\; & \cellcolor{tblTauNine}0.0000 & \cellcolor{tblTauNine}0.0000 & \cellcolor{tblTauNine}0
\cellcolor{tblTauEight}\; & \cellcolor{tblTauEight}0.0372 & \cellcolor{tblTauEight}0.6787 & \cellcolor{tblTauEight}221
\cellcolor{tblTauSeven}\; & \cellcolor{tblTauSeven}0.1816 & \cellcolor{tblTauSeven}0.6698 & \cellcolor{tblTauSeven}1078
& 0.3186 \\

& MC Dropout (M=10)
\cellcolor{tblTauNine}\; & \cellcolor{tblTauNine}0.3901 & \cellcolor{tblTauNine}0.7003 & \cellcolor{tblTauNine}2316
\cellcolor{tblTauEight}\; & \cellcolor{tblTauEight}0.5875 & \cellcolor{tblTauEight}0.6995 & \cellcolor{tblTauEight}3488
\cellcolor{tblTauSeven}\; & \cellcolor{tblTauSeven}0.7349 & \cellcolor{tblTauSeven}0.6890 & \cellcolor{tblTauSeven}4363
& 0.3150 \\

& UAT-LITE (M=10)
\cellcolor{tblTauNine}\; & \cellcolor{tblTauNine}0.4051 & \cellcolor{tblTauNine}0.6998 & \cellcolor{tblTauNine}2405
\cellcolor{tblTauEight}\; & \cellcolor{tblTauEight}0.5979 & \cellcolor{tblTauEight}0.7000 & \cellcolor{tblTauEight}3550
\cellcolor{tblTauSeven}\; & \cellcolor{tblTauSeven}0.7408 & \cellcolor{tblTauSeven}0.6885 & \cellcolor{tblTauSeven}4398
& 0.3177 \\

& UAT-LITE+TS
\cellcolor{tblTauNine}\; & \cellcolor{tblTauNine}0.0000 & \cellcolor{tblTauNine}0.0000 & \cellcolor{tblTauNine}0
\cellcolor{tblTauEight}\; & \cellcolor{tblTauEight}0.0487 & \cellcolor{tblTauEight}0.6574 & \cellcolor{tblTauEight}289
\cellcolor{tblTauSeven}\; & \cellcolor{tblTauSeven}0.1728 & \cellcolor{tblTauSeven}0.6676 & \cellcolor{tblTauSeven}1026
& 0.3177 \\

\bottomrule
\end{tabular}
\end{adjustbox}

\caption{\textbf{Threshold-based selective prediction on MNLI and SQuAD~2.0 under fixed confidence thresholds.}
Results are reported on MNLI ($N=4908$) and SQuAD~2.0 ($N=5937$) for $\tau\in\{0.9,0.8,0.7\}$.
Cov@$\tau$ denotes the admitted fraction, Acc@$\tau$ the accuracy on the admitted subset, and $N@\tau$ the admitted count; AURC summarizes the full risk--coverage trade-off (lower is better).
The table shows that temperature-scaled variants admit substantially fewer examples at high thresholds, especially on SQuAD~2.0, while UAT-LITE variants tend to preserve higher coverage at the same operating points.
These results provide a standard-setting counterpart to the broader OOD selective-prediction analyses reported elsewhere in the paper.}
\label{tab:5n2p-thresh}
\end{table*}

\subsection{OOD Threshold-Based Selective Prediction (HANS/ANLI/SNLI/IMDb)}
\label{app:ood_thresh}

To further evaluate uncertainty quality under stronger distribution shifts, we conduct
threshold-based selective prediction experiments across four widely used
out-of-distribution (OOD) evaluation suites: \textbf{HANS} ($N=30{,}000$),
\textbf{ANLI} ($N=3{,}200$), \textbf{SNLI} ($N=9{,}824$), and \textbf{IMDb} ($N=25{,}000$).
The full results are reported in Table~\ref{tab:5n2p-ood-thresh}.

Selective prediction evaluates whether a model can \emph{abstain} on low-confidence
examples while maintaining high accuracy on the retained subset.
For a confidence threshold $\tau \in \{0.9, 0.8, 0.7\}$, we admit predictions only when the
model confidence exceeds $\tau$.
For each threshold, we report coverage (Cov@$\tau$), admitted accuracy (Acc@$\tau$),
admitted count ($N@\tau$), and AURC.

Table~\ref{tab:5n2p-ood-thresh} consolidates results across all OOD suites in a single
view, with Panels A--D corresponding to HANS, ANLI, SNLI, and IMDb, respectively.
HANS and ANLI represent particularly challenging shifts designed to break common NLI heuristics,
whereas SNLI and IMDb provide additional cross-domain evaluation settings.

Across datasets, the table shows how different uncertainty estimation methods
affect selective prediction behavior.
In particular, TS often changes probability calibration without altering the predicted label, which can
substantially shift coverage at fixed thresholds.
Accordingly, differences in Cov@$\tau$ should be interpreted as changes in the \emph{confidence
distribution} rather than in classification decisions.
This appendix therefore complements the MNLI transfer analysis by providing a broader
view of uncertainty-guided filtering under diverse OOD conditions.

%

\begin{table*}[t]
\centering
\setlength{\tabcolsep}{3.0pt}
\renewcommand{\arraystretch}{1.06}
\scriptsize

\resizebox{\textwidth}{!}{%
\begin{tabular}{l c c c c c c c c c c c c c}
\toprule

\rowcolor{tblHansHeader}
\textbf{Method} &
\textbf{Acc} &
\textbf{ECE$\downarrow$} &
\textbf{NLL$\downarrow$} &
\textbf{Cov@.9} & \textbf{Acc@.9} & \textbf{N@.9} &
\textbf{Cov@.8} & \textbf{Acc@.8} & \textbf{N@.8} &
\textbf{Cov@.7} & \textbf{Acc@.7} & \textbf{N@.7} &
\textbf{AURC$\downarrow$} \\
\midrule

\rowcolor{tblHansSection}
\multicolumn{14}{l}{\textbf{Panel A: HANS OOD (N=30{,}000)}} \\
\midrule

\rowcolor{tblHansA}
Baseline      & 0.4808 & 0.3092 & 1.0581 & 0.2209 & 0.4831 &  6626 & 0.5427 & 0.4930 & 16281 & 0.7502 & 0.4840 & 22507 & 0.5287 \\
\rowcolor{tblHansA}
Base+TS       & 0.4836 & 0.2283 & 0.8737 & 0.0625 & 0.3952 &  1875 & 0.2333 & 0.4856 &  6999 & 0.5415 & 0.4942 & 16246 & 0.5268 \\
\rowcolor{tblHansA}
MC Dropout    & 0.4810 & 0.2506 & 0.9256 & 0.0883 & 0.4392 &  2650 & 0.3428 & 0.4816 & 10284 & 0.5929 & 0.4799 & 17788 & 0.5322 \\
\rowcolor{tblHansB}
UAT-LITE      & 0.4807 & 0.2395 & 0.9074 & 0.0687 & 0.4469 &  2061 & 0.3039 & 0.4650 &  9117 & 0.5583 & 0.4646 & 16749 & 0.5385 \\
\rowcolor{tblHansB}
UAT-LITE+TS   & 0.4817 & 0.1908 & 0.8222 & 0.0210 & 0.3386 &   629 & 0.1312 & 0.4554 &  3935 & 0.3951 & 0.4673 & 11853 & 0.5386 \\

\midrule

\rowcolor{tblHansSection}
\multicolumn{14}{l}{\textbf{Panel B: ANLI OOD (N=3{,}200)}} \\
\midrule

\rowcolor{tblHansA}
Baseline      & 0.3109 & 0.4209 & 1.9074 & 0.2050 & 0.2668 &   656 & 0.3978 & 0.2875 &  1273 & 0.5747 & 0.2953 &  1839 & 0.7130 \\
\rowcolor{tblHansA}
Base+TS       & 0.3109 & 0.3413 & 1.5450 & 0.0541 & 0.2370 &   173 & 0.2013 & 0.2609 &   644 & 0.3769 & 0.2886 &  1206 & 0.7126 \\
\rowcolor{tblHansA}
MC Dropout    & 0.3106 & 0.3916 & 1.7681 & 0.1456 & 0.2446 &   466 & 0.3231 & 0.2834 &  1034 & 0.5094 & 0.2914 &  1630 & 0.7141 \\
\rowcolor{tblHansB}
UAT-LITE      & 0.3091 & 0.4040 & 1.8222 & 0.1778 & 0.2566 &   569 & 0.3516 & 0.2836 &  1125 & 0.5319 & 0.2955 &  1702 & 0.7130 \\
\rowcolor{tblHansB}
UAT-LITE+TS   & 0.3091 & 0.3443 & 1.5591 & 0.0656 & 0.2381 &   210 & 0.2112 & 0.2692 &   676 & 0.3847 & 0.2851 &  1231 & 0.7124 \\

\midrule

\rowcolor{tblHansSection}
\multicolumn{14}{l}{\textbf{Panel C: SNLI OOD (N=9{,}824)}} \\
\midrule

\rowcolor{tblHansA}
Baseline      & 0.6810 & 0.0684 & 0.7757 & 0.2579 & 0.9013 &  2534 & 0.4540 & 0.8377 &  4460 & 0.6162 & 0.7907 &  6054 & 0.1772 \\
\rowcolor{tblHansA}
Base+TS       & 0.6810 & 0.0179 & 0.7497 & 0.0618 & 0.9176 &   607 & 0.2513 & 0.9032 &  2469 & 0.4379 & 0.8440 &  4302 & 0.1760 \\
\rowcolor{tblHansA}
MC Dropout    & 0.6819 & 0.0328 & 0.7497 & 0.1766 & 0.9182 &  1735 & 0.3727 & 0.8651 &  3661 & 0.5408 & 0.8095 &  5313 & 0.1752 \\
\rowcolor{tblHansB}
UAT-LITE      & 0.6759 & 0.0474 & 0.7650 & 0.1995 & 0.9122 &  1960 & 0.3954 & 0.8540 &  3884 & 0.5610 & 0.8044 &  5511 & 0.1792 \\
\rowcolor{tblHansB}
UAT-LITE+TS   & 0.6759 & 0.0195 & 0.7534 & 0.0693 & 0.9383 &   681 & 0.2438 & 0.8944 &  2395 & 0.4274 & 0.8445 &  4199 & 0.1783 \\

\midrule

\rowcolor{tblHansSection}
\multicolumn{14}{l}{\textbf{Panel D: IMDb OOD (N=25{,}000)}} \\
\midrule

\rowcolor{tblHansA}
Baseline      & 0.8281 & 0.1000 & 0.4904 & 0.7780 & 0.8925 & 19451 & 0.8641 & 0.8687 & 21602 & 0.9173 & 0.8538 & 22932 & 0.0640 \\
\rowcolor{tblHansA}
Base+TS       & 0.8281 & 0.0442 & 0.3959 & 0.5960 & 0.9325 & 14899 & 0.7683 & 0.8950 & 19208 & 0.8634 & 0.8688 & 21586 & 0.0640 \\
\rowcolor{tblHansA}
MC Dropout    & 0.8195 & 0.0957 & 0.4845 & 0.7367 & 0.8938 & 18417 & 0.8405 & 0.8673 & 21012 & 0.9014 & 0.8495 & 22536 & 0.0695 \\
\rowcolor{tblHansB}
UAT-LITE      & 0.8186 & 0.1054 & 0.5078 & 0.7674 & 0.8862 & 19184 & 0.8570 & 0.8626 & 21426 & 0.9109 & 0.8463 & 22772 & 0.0694 \\
\rowcolor{tblHansB}
UAT-LITE+TS   & 0.8186 & 0.0501 & 0.4117 & 0.5781 & 0.9279 & 14453 & 0.7598 & 0.8882 & 18995 & 0.8583 & 0.8621 & 21457 & 0.0694 \\

\bottomrule
\end{tabular}}
\caption{\textbf{OOD threshold-based selective prediction across four evaluation suites.}
We report overall accuracy, ECE, NLL, Coverage@$\tau$, Acc@$\tau$, admitted counts ($N@\tau$) for $\tau \in \{0.9, 0.8, 0.7\}$, and AURC.
Panels A--D correspond to HANS ($N=30{,}000$), ANLI ($N=3{,}200$), SNLI ($N=9{,}824$), and IMDb ($N=25{,}000$), respectively.
Across datasets, temperature-scaled variants generally reduce ECE and NLL, but often admit substantially fewer examples at high confidence thresholds, especially on HANS and ANLI.
By contrast, UAT-LITE variants typically preserve higher coverage at the same thresholds, reflecting a different confidence profile under shift.
The table therefore highlights the trade-off between post-hoc confidence compression and uncertainty-aware filtering under diverse OOD conditions.}
\label{tab:5n2p-ood-thresh}
\vspace{-2mm}
\end{table*}

\subsection{Global Document-Level MC Dropout Diagnostics}
\label{app:global_mc}

To contextualize the behavior of uncertainty-weighted attention,
we evaluate standard \emph{global/document-level MC Dropout}
applied to the same BERT-base checkpoint used in the main results.
Here, dropout is injected at inference time without modifying
the attention mechanism, and predictive uncertainty is estimated
from Monte Carlo sampling ($M=10$).
\cref{fig:mc_diagnostics_heatmap} summarizes calibration (ECE, Brier, NLL),
selective prediction (AURC, Acc@70/80/90), and accuracy across SQuAD 2.0, MNLI, and SST-2
(absolute values shown in each cell; color denotes row-normalized relative performance).

The purpose of this analysis is diagnostic rather than competitive.
Global MC Dropout improves over the unscaled baseline in calibration
(e.g., ECE decreases across datasets), but it remains materially behind
temperature scaling in marginal calibration (see the TS gaps reported elsewhere),
confirming that naive stochastic inference alone does not recover TS-level calibration.

Importantly, this comparison isolates the effect of \emph{where} uncertainty is injected.
Unlike global MC Dropout which affects output variance while leaving evidence aggregation unchanged,
UAT-LITE incorporates uncertainty directly into the attention logits,
modifying evidence aggregation rather than only output variability.
The diagnostic therefore clarifies that improvements observed in the main text are not attributable
merely to Monte Carlo sampling, but to attention-level modulation.

\begin{figure}[t]
  \centering
  \includegraphics[width=\linewidth]{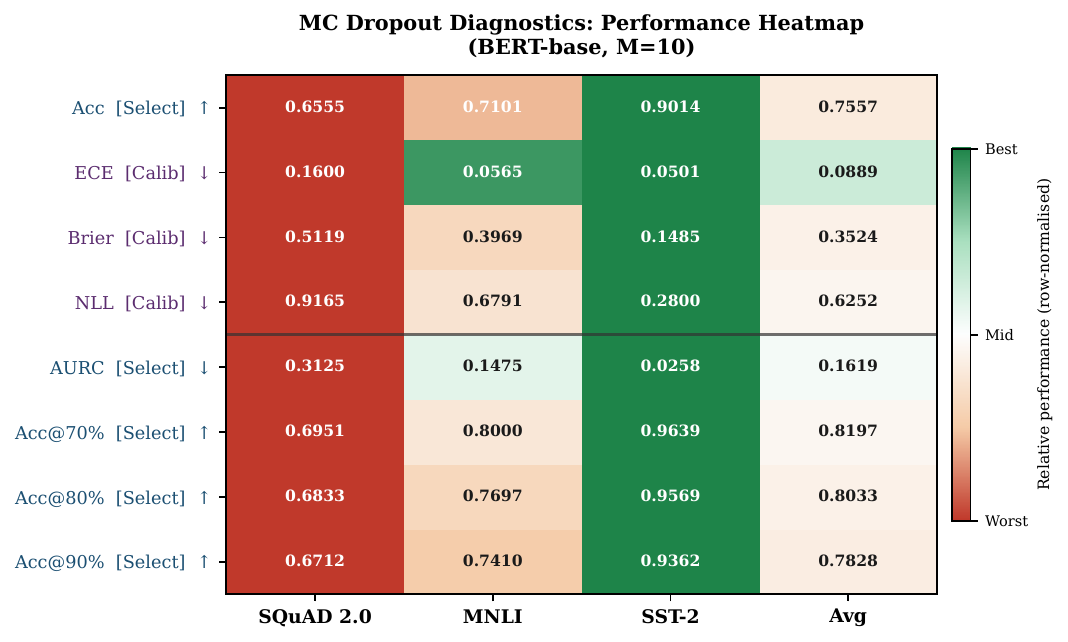}
  \caption{\textbf{Global/document-level MC Dropout diagnostics on BERT-base ($M=10$).}
  Heatmap of absolute performance for global MC Dropout across SQuAD 2.0, MNLI, and SST-2
  (metrics: Acc$\uparrow$, ECE$\downarrow$, Brier$\downarrow$, NLL$\downarrow$, AURC$\downarrow$, and Acc@70/80/90$\uparrow$).
  Each cell reports the numeric value; shading indicates row-normalized relative performance (best-to-worst within each metric row).
  While global MC Dropout improves calibration relative to the unscaled baseline, it remains substantially behind temperature scaling,
  motivating attention-level uncertainty modulation in UAT-LITE.}
  \label{fig:mc_diagnostics_heatmap}
  \vspace{-2mm}
\end{figure}

\subsection{Heuristic Sensitivity Analysis on HANS}
\label{app:hans_heuristics}

To localize failure modes under distribution shift, we provide a per-heuristic
breakdown on the HANS benchmark.
HANS isolates three common heuristic shortcuts: lexical overlap, subsequence,
and constituent structure.
Table~\ref{tab:5n2p-hans-heur} reports overall accuracy by heuristic category
(top block) and non-entailment-only accuracy (bottom block), which is the more
challenging and shortcut-prone regime.

The top block shows that overall per-heuristic differences are relatively modest
across methods.
However, the non-entailment-only breakdown is more revealing, because this subset
isolates cases where shallow heuristics often produce overconfident errors.
In this regime, UAT-LITE and especially UAT-LITE+TS show clear gains over the
baseline, with the largest improvements on constituent-based examples.
These results suggest that uncertainty-aware attention is most useful not in
easy heuristic-aligned cases, but in examples where shortcut-driven evidence is
misleading and the model must suppress superficially plausible matches.
Overall, this diagnostic provides a more localized view of robustness under shift
than aggregate HANS accuracy alone.

\begin{table*}[t]
\centering
\setlength{\tabcolsep}{10pt}
\renewcommand{\arraystretch}{1.12}
\small

\begin{tabular}{lccc}
\toprule
\rowcolor{tblHeurHeader}
\textbf{Method} & \textbf{Lex.\ Overlap} & \textbf{Subsequence} & \textbf{Constituent} \\
\midrule

\rowcolor{tblHeurTop}
Baseline      & 0.5043 & 0.4632 & 0.4750 \\
\rowcolor{tblHeurTop}
Base+TS       & \textbf{0.5104} & 0.4622 & 0.4783 \\
\rowcolor{tblHeurTop}
MC Dropout    & 0.5000 & 0.4642 & 0.4787 \\
\rowcolor{tblHeurTop}
UAT-LITE      & 0.4835 & \textbf{0.4777} & 0.4810 \\
\rowcolor{tblHeurTop}
UAT-LITE+TS   & 0.4859 & 0.4710 & \textbf{0.4881} \\

\midrule
\rowcolor{tblHeurHeader}
\multicolumn{4}{c}{\textit{Non-entailment only}} \\
\midrule

\rowcolor{tblHeurBot}
Baseline      & 0.1804 & 0.2358 & 0.3428 \\
\rowcolor{tblHeurBot}
Base+TS       & 0.2230 & 0.2714 & 0.3824 \\
\rowcolor{tblHeurBot}
MC Dropout    & 0.2132 & 0.2408 & 0.4158 \\
\rowcolor{tblHeurBot}
UAT-LITE      & 0.2960 & 0.2722 & 0.5580 \\
\rowcolor{tblHeurBot}
UAT-LITE+TS   & \textbf{0.3548} & \textbf{0.3184} & \textbf{0.6100} \\

\bottomrule
\end{tabular}

\caption{\textbf{HANS per-heuristic breakdown under distribution shift.}
The top block reports overall accuracy for the three HANS heuristic categories: lexical overlap, subsequence, and constituent structure.
The bottom block isolates \textbf{non-entailment-only} examples, which form the main shortcut-sensitive failure regime in HANS.
Overall per-heuristic accuracy differences are modest, but the non-entailment subset reveals clearer separation between methods.
UAT-LITE and especially UAT-LITE+TS improve substantially on this subset, with the largest gains on constituent examples, indicating better behavior when superficial heuristic matches are misleading.}
\label{tab:5n2p-hans-heur}
\end{table*}

\subsection{Component Ablation of Uncertainty Injection}
\label{app:component_ablation}

Fig.~\ref{fig:ablation} evaluates the contribution of
individual components of UAT-LITE on SQuAD~2.0 (validation). We isolate: (i) baseline inference, (ii) MC sampling without attention modulation, (iii) uncertainty-weighted attention,
and (iv) the full combined model. The primary metric is Expected Calibration Error (ECE), with accuracy reported for reference. Two observations are noteworthy. First, uncertainty-weighted attention produces the largest
reduction in ECE relative to baseline,
indicating that incorporating epistemic variance
into the attention logits directly improves calibration.

Second, combining sampling with attention modulation
yields further improvements,
suggesting that stochastic sampling and
structural uncertainty injection are complementary. Notably, accuracy remains stable across variants,
indicating that improvements in calibration
are not driven by trivial accuracy trade-offs. This ablation confirms that gains observed in the main text
are not attributable to Monte Carlo sampling alone,
but specifically to attention-level uncertainty weighting.

\begin{figure*}[t]
    \centering
    \includegraphics[width=\textwidth]{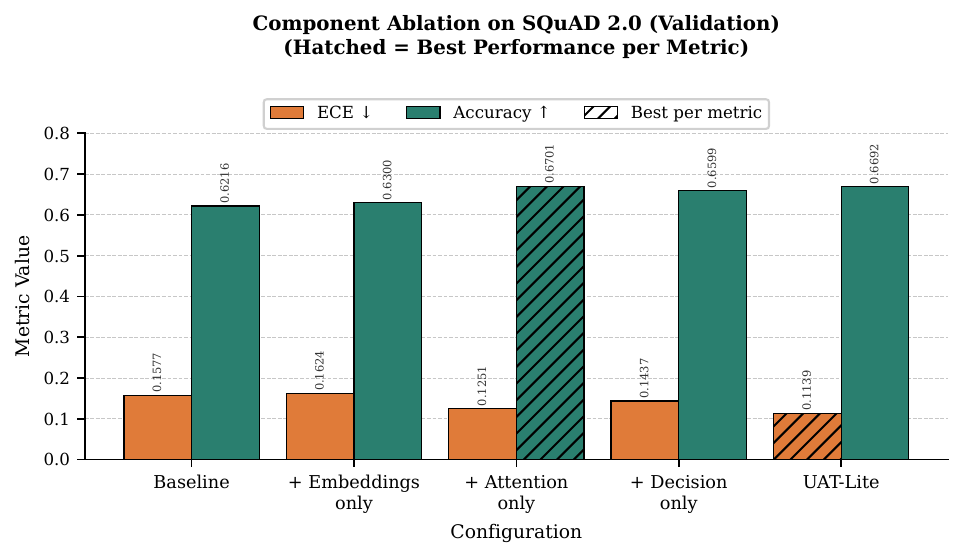}
    \caption{
    \textbf{Component ablation on SQuAD~2.0 (validation).}
    We compare baseline inference, embeddings-only stochasticity, attention-only uncertainty injection, decision-only uncertainty shaping, and the full UAT-LITE model.
    Expected Calibration Error (ECE; lower is better) is the primary metric, while accuracy is reported for reference.
    The attention-only variant yields the largest single-component ECE reduction relative to the baseline, indicating that uncertainty-weighted attention is the main contributor to the calibration gains.
    The full model achieves the best overall calibration among the shown configurations while maintaining accuracy comparable to the other variants.
    Hatched bars indicate the best value for each metric.
    }
    \label{fig:ablation}
\end{figure*}

\subsection{Penalty Functional Form Ablation}
\label{app:penalty_form}

To assess whether the exponential uncertainty penalty is essential, we compare three modulation forms under identical settings ($M=10$, $\lambda=1.0$, SQuAD 2.0, single seed):  
(i) exponential: $a_{ij}\exp(-\lambda u_{ij})$,  
(ii) linear (clipped): $a_{ij}\max(0, 1-\lambda u_{ij})$, and  
(iii) reciprocal: $a_{ij}/(1+\lambda u_{ij})$.

Fig.~\ref{fig:penalty_ablation} reports the results. The three forms yield broadly comparable accuracy and calibration behavior, indicating that the overall effect stems from uncertainty-aware modulation rather than a single functional choice. We retain the exponential form due to its smoothness, guaranteed positivity (without clipping), and interpretation as an additive logit penalty ($\log \tilde a_{ij} = \log a_{ij} - \lambda u_{ij}$), which offers a convenient and numerically well-behaved mechanism for downweighting high-uncertainty evidence.

\begin{figure}[h]
  \centering
  \includegraphics[width=\linewidth]{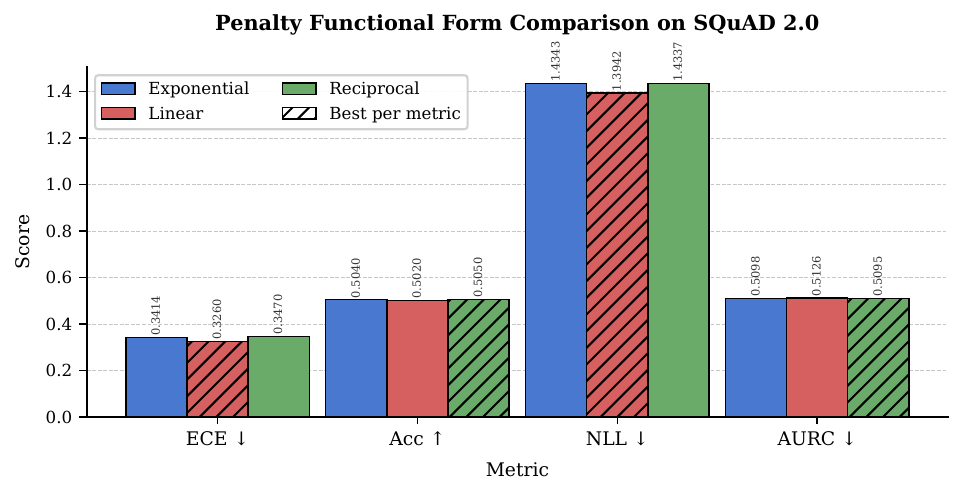}
  \caption{\textbf{Penalty functional form ablation (SQuAD 2.0; $M=10$, $\lambda=1.0$, single seed).}
  We compare three uncertainty-aware modulations of the attention weight $a_{ij}$:
  exponential $a_{ij}\exp(-\lambda u_{ij})$, linear (clipped) $a_{ij}\max(0,1-\lambda u_{ij})$, and reciprocal $a_{ij}/(1+\lambda u_{ij})$.
  Across metrics (ECE$\downarrow$, Acc$\uparrow$, NLL$\downarrow$, AURC$\downarrow$), all three yield similar performance, indicating the benefit comes from \emph{uncertainty-aware downweighting} rather than a specific functional choice. We use the exponential form as the default since it is smooth, strictly positive (no clipping), and equivalent to an additive logit penalty: $\log \tilde a_{ij}=\log a_{ij}-\lambda u_{ij}$.}
  \label{fig:penalty_ablation}
\end{figure}


\subsection{Aggregation and Normalization Diagnostics}
\label{app:agg_ln_diagnostics}

This section probes the robustness of two implementation choices underlying \ours:
(i) how we aggregate Monte Carlo variance across draws, and (ii) where normalization
is applied prior to uncertainty measurement.
We evaluate these choices using the MNLI negation diagnostic, which perturbs inputs
without changing the original labels, and we track both label-dependent signals
(error change) and label-free stability signals (confidence change) to ensure that
observed behaviors are not artifacts of a particular aggregation heuristic or a
specific LayerNorm placement.

\paragraph{Aggregation-statistic sanity check (no LayerNorm).}
Table~\ref{tab:wxzc_checks_merged} (top block) compares three variance aggregation
statistics (mean-std, RMS-std, median-std) on the MNLI negation diagnostic
($N{=}200$, seed$=42$, $\lambda{=}0.5$).
We report changes in confidence and error (dConf/dErr), computed as (negated $-$ original),
along with ECE on the sampled subset (SmpECE) and development ECE (DevECE).
Across aggregation schemes, dConf and dErr are tightly clustered, indicating that the
negation sensitivity signal is stable to the aggregation choice.
Although SmpECE/DevECE vary slightly across statistics, the qualitative conclusions
and effect directions remain unchanged.

\paragraph{LayerNorm invariance.}
Table~\ref{tab:wxzc_checks_merged} (bottom block) tests whether applying LayerNorm before
uncertainty measurement changes the diagnostic outcomes.
For each aggregation statistic, we compare dConf, dErr, and DevECE with and without
LayerNorm (``+LN'').
We observe nearly unchanged dConf and comparable dErr across normalization conditions,
while DevECE improves consistently under +LN.
Overall, these results suggest that \ours is not sensitive to the precise LayerNorm
placement and that the reported negation behaviors are not normalization artifacts.

\begin{table*}[t]
\centering
\setlength{\tabcolsep}{3.5pt}
\renewcommand{\arraystretch}{1.08}
\scriptsize

\begin{adjustbox}{max width=\textwidth, center}
\begin{tabular}{l c c c c c c}
\toprule

\rowcolor{tblMergeHead}
\textbf{Variant} &
\textbf{AccOrig} &
\textbf{AccNeg} &
\textbf{dConf} &
\textbf{dErr} &
\textbf{SmpECE} &
\textbf{DevECE} \\
\midrule

\rowcolor{tblMergeA}
\multicolumn{7}{c}{\textbf{Aggregation-statistic sanity check (no LayerNorm)}} \\
\rowcolor{tblMergeA}
Base Det.   & 0.7050 & 0.4950 & $-0.0831$ & $+0.2100$ & 0.1313 & -- \\
\rowcolor{tblMergeA}
mean-std    & 0.7000 & 0.4900 & $-0.0778$ & $+0.2100$ & 0.1123 & 0.0595 \\
\rowcolor{tblMergeA}
RMS-std     & 0.6800 & 0.4750 & $-0.0817$ & $+0.2050$ & 0.1199 & 0.0576 \\
\rowcolor{tblMergeA}
median-std  & 0.6850 & 0.5000 & $-0.0821$ & $+0.1850$ & 0.1056 & 0.0584 \\

\midrule

\rowcolor{tblMergeB}
\multicolumn{7}{c}{\textbf{LayerNorm-invariance check (with/without LN)}} \\
\rowcolor{tblMergeHead}
\textbf{Variant} &
\multicolumn{1}{c}{\textbf{dConf}} &
\multicolumn{1}{c}{\textbf{dConf+LN}} &
\multicolumn{1}{c}{\textbf{dErr}} &
\multicolumn{1}{c}{\textbf{dErr+LN}} &
\multicolumn{1}{c}{\textbf{DevECE}} &
\multicolumn{1}{c}{\textbf{DevECE+LN}} \\
\midrule

\rowcolor{tblMergeB}
mean-std   & $-0.0778$ & $-0.0782$ & $+0.2100$ & $+0.2000$ & 0.0595 & 0.0466 \\
\rowcolor{tblMergeB}
RMS-std    & $-0.0817$ & $-0.0816$ & $+0.2050$ & $+0.2150$ & 0.0576 & 0.0455 \\
\rowcolor{tblMergeB}
median-std & $-0.0821$ & $-0.0818$ & $+0.1850$ & $+0.2150$ & 0.0584 & 0.0482 \\

\bottomrule
\end{tabular}
\end{adjustbox}

\caption{Aggregation-statistic sanity check and LayerNorm-invariance check on MNLI.
Sanity check uses the MNLI negation diagnostic ($N{=}200$, seed$=42$, $\lambda{=}0.5$), reporting AccOrig/AccNeg and changes (dConf/dErr) under negation.
``SmpECE'' is ECE on the sampled subset; ``DevECE'' is dev ECE.
LayerNorm-invariance reports the same aggregation statistics with/without applying LayerNorm before uncertainty measurement; LayerNorm yields nearly unchanged dConf and comparable dErr, while improving dev ECE across aggregations.}
\label{tab:wxzc_checks_merged}
\vspace{-2mm}
\end{table*}


\subsection{Scale and Geometry Diagnostics for Token Uncertainty}

\label{app:scale-geometry}

This appendix section examines whether token-level uncertainty computed from embedding-space stochasticity reflects a meaningful epistemic signal or could be explained by simple geometric artifacts (e.g., global embedding rescaling, anisotropy, or magnitude effects). To stress-test this possibility, we conduct three complementary diagnostics on representative tasks (MNLI and SQuAD) under the same inference-time configuration used in the main experiments. First, we test robustness under explicit embedding rescaling ($e_j \leftarrow c\,e_j$) and quantify the resulting changes in task metrics and token-uncertainty rankings. Second, we evaluate alignment between token uncertainty and output-space epistemic quantities derived from MC dropout (predictive entropy and mutual information), while controlling for embedding norm to rule out trivial magnitude explanations. Third, we ablate alternative uncertainty definitions to verify that conclusions do not depend on a single aggregation statistic. Together, these analyses clarify what aspects of $U(x_j)$ are robust, what is scale-sensitive, and why the observed behavior is not reducible to embedding magnitude alone.

\subsubsection{Embedding-Rescaling Invariance Test}
This diagnostic tests whether token uncertainty derived from embedding-space stochasticity could be explained by simple embedding-scale effects. We rescale the input token embeddings at inference time by a constant factor $c\in\{0.5,1,2,4\}$ while keeping all other settings fixed, and rerun evaluation on MNLI and SQuAD. Table~\ref{tab:rescaling} reports task-level metrics and deltas relative to the unscaled baseline ($c{=}1$), along with (i) token-uncertainty rank stability via Spearman correlation $\rho(U,U_c)$ and (ii) attention-pattern stability via cosine similarity between the resulting attention weights.

Two observations are salient. First, attention modulation remains highly stable under rescaling (Attn CosSim $\ge 0.979$ across all $c$), indicating that the induced attention patterns are largely preserved even under substantial changes in embedding magnitude. Second, token-uncertainty rankings are not strictly scale-invariant ($\rho\approx 0.10$--$0.23$ for $c\neq 1$), and calibration metrics can shift under extreme rescaling (e.g., MNLI $\Delta$ECE up to $+0.0095$ at $c{=}2$, while SQuAD ECE improves at $c{=}4$). Thus, while the attention behavior is robust, the uncertainty signal exhibits bounded scale sensitivity, motivating the complementary sanity checks and uncertainty-definition ablations below.

\subsubsection{Epistemic Alignment via Correlation Checks}
To test whether token-level uncertainty $U(x_j)$ aligns with more standard output-space epistemic indicators, we compute (under the same MC dropout samples) predictive entropy $H[\bar p(y|x)]$ and mutual information $\mathrm{MI}=H[\bar p(y|x)]-\mathbb{E}[H[p(y|x,\omega)]]$, and measure their association with $U(x_j)$. Table~\ref{tab:corr} reports both Pearson and Spearman correlations. In addition, to rule out the possibility that $U(x_j)$ is simply tracking embedding magnitude, we report correlations with embedding norm $\|e_j\|$ and partial correlations between $U$ and $\{H,\mathrm{MI}\}$ controlling for $\|e_j\|$.

On MNLI, $U(x_j)$ shows consistent monotonic association with output-space uncertainty (Spearman $\approx 0.37$ with entropy and $\approx 0.41$ with MI), while linear correlations are small but statistically reliable given the large number of tokens. On SQuAD, correlations are weaker, suggesting task-dependent alignment between token- and output-space uncertainty. Importantly, $U(x_j)$ is essentially uncorrelated with embedding norm on both datasets (Pearson near $0$; Spearman near $0$), and the norm-controlled partial correlations remain nearly unchanged, indicating that the signal is not explained by embedding magnitude alone.

\subsubsection{Alternative Uncertainty Definitions: Robustness Ablation}
Finally, we evaluate whether the results depend on a particular choice of uncertainty definition. Beyond the default \texttt{mean-std}, we compare alternatives that are commonly used to summarize multivariate dispersion under stochastic embeddings: \texttt{cov-trace} (trace of the covariance), \texttt{cv} (coefficient of variation), and \texttt{norm-cv} (a normalized CV variant). Table~\ref{tab:variants} shows that performance and calibration remain stable across these variants on both MNLI and SQuAD (differences are small and do not change the qualitative conclusions), suggesting that UAT-LITE does not hinge on a single arbitrary aggregation of embedding-space variability.

\begin{table*}[t]
\centering
\small
\setlength{\tabcolsep}{6pt}

\begin{tabular}{lcccccccc}
\toprule
\rowcolor{tblHansHeader}
Dataset & $c$ & ECE & NLL & Acc & AURC & $\Delta$ECE & $\Delta$NLL & $\Delta$Acc \\
\midrule
\rowcolor{tblHansSection}
\multicolumn{9}{l}{\textbf{MNLI}} \\
\rowcolor{tblHansA}
MNLI & 0.5 & 0.0117 & 0.5335 & 0.7773 & 0.0836 & -0.0048 & +0.0138 & -0.0100 \\
\rowcolor{tblHansB}
MNLI & 1.0 & 0.0165 & 0.5197 & 0.7873 & 0.0783 & +0.0000 & +0.0000 & +0.0000 \\
\rowcolor{tblHansA}
MNLI & 2.0 & 0.0260 & 0.5418 & 0.7753 & 0.0864 & +0.0095 & +0.0221 & -0.0120 \\
\rowcolor{tblHansB}
MNLI & 4.0 & 0.0212 & 0.5553 & 0.7613 & 0.0924 & +0.0047 & +0.0356 & -0.0260 \\
\midrule
\rowcolor{tblHansSection}
\multicolumn{9}{l}{\textbf{SQuAD}} \\
\rowcolor{tblHansA}
SQuAD & 0.5 & 0.3216 & 1.3102 & 0.5075 & 0.4923 & -0.0123 & -0.0537 & +0.0035 \\
\rowcolor{tblHansB}
SQuAD & 1.0 & 0.3339 & 1.3639 & 0.5040 & 0.4913 & +0.0000 & +0.0000 & +0.0000 \\
\rowcolor{tblHansA}
SQuAD & 2.0 & 0.3250 & 1.2799 & 0.5020 & 0.4944 & -0.0089 & -0.0840 & -0.0020 \\
\rowcolor{tblHansB}
SQuAD & 4.0 & 0.2944 & 1.1441 & 0.5060 & 0.4937 & -0.0395 & -0.2198 & +0.0020 \\
\bottomrule
\end{tabular}

\vspace{4pt}

\begin{tabular}{lccc}
\toprule
\rowcolor{tblHansHeader}
Dataset & $c$ & Spearman $\rho(U, U_c)$ & Attn CosSim \\
\midrule
\rowcolor{tblHansSection}
\multicolumn{4}{l}{\textbf{MNLI}} \\
\rowcolor{tblHansA}
MNLI & 0.5 & 0.2332 & 0.9869 \\
\rowcolor{tblHansB}
MNLI & 2.0 & 0.2120 & 0.9842 \\
\rowcolor{tblHansA}
MNLI & 4.0 & 0.2015 & 0.9792 \\
\midrule
\rowcolor{tblHansSection}
\multicolumn{4}{l}{\textbf{SQuAD}} \\
\rowcolor{tblHansA}
SQuAD & 0.5 & 0.1522 & 0.9922 \\
\rowcolor{tblHansB}
SQuAD & 2.0 & 0.0998 & 0.9913 \\
\rowcolor{tblHansA}
SQuAD & 4.0 & 0.1245 & 0.9851 \\
\bottomrule
\end{tabular}

\caption{\textbf{Embedding-rescaling invariance diagnostic.}
We rescale input embeddings by a factor $c$ at inference time to test whether the proposed uncertainty signal is primarily a trivial artifact of embedding magnitude.
The \textbf{top panel} reports task-level performance on MNLI and SQuAD, including ECE, NLL, accuracy, AURC, and the corresponding changes relative to the reference setting $c{=}1$.
The \textbf{bottom panel} reports token-level uncertainty rank stability under rescaling via Spearman $\rho(U,U_c)$, together with attention-pattern stability measured by cosine similarity.
Across both datasets, attention patterns remain highly similar under rescaling, while uncertainty rankings remain positively correlated and task-level behavior changes only moderately.
Taken together, these results suggest that the method is not explained solely by raw embedding scale and remains reasonably stable under embedding rescaling.}
\label{tab:rescaling}
\end{table*}

\begin{table*}[t]
\centering
\small
\setlength{\tabcolsep}{4pt}
\renewcommand{\arraystretch}{1.05}
\resizebox{\textwidth}{!}{%
\begin{tabular}{lcccccccc}
\toprule
\rowcolor{tblGreenHeader}
Dataset &
$U$--$H$ (P) & $U$--$H$ (S) &
$U$--MI (P) & $U$--MI (S) &
$U$--$\|e\|$ (P) & $U$--$\|e\|$ (S) &
$\rho(U,H \mid \|e\|)$ & $\rho(U,\mathrm{MI} \mid \|e\|)$ \\
\midrule
\rowcolor{tblGreenA}
MNLI  & 0.0580 & 0.3677 & 0.0759 & 0.4137 & -0.0071 & -0.0236 & 0.0583 & 0.0759 \\
\rowcolor{tblGreenB}
SQuAD & -0.0163 & -0.0222 & 0.0055 & 0.1694 & -0.0256 & -0.0510 & -0.0157 & 0.0041 \\
\bottomrule
\end{tabular}%
}
\caption{Correlation sanity checks for token uncertainty $U(x_j)$ against output-space epistemic measures (predictive entropy $H$ and mutual information MI) computed from MC dropout. We also report correlations with embedding norm $\|e_j\|$ and partial correlations controlling for $\|e_j\|$. Token uncertainty is essentially uncorrelated with embedding norm, and norm-controlled partial correlations remain unchanged.}
\label{tab:corr}
\end{table*}

\begin{table}[t]
\centering
\small
\setlength{\tabcolsep}{6pt}
\begin{tabular}{l l c c c c}
\toprule
\rowcolor{tblLavHeader}
Dataset & $U(x_j)$ Variant & ECE & NLL & Acc & AURC \\
\midrule
\rowcolor{tblLavSection}
\multicolumn{6}{l}{\textbf{MNLI}} \\
\rowcolor{tblLavA}
MNLI  & mean-std  & 0.0158 & 0.5128 & 0.7908 & 0.0775 \\
\rowcolor{tblLavB}
MNLI  & cov-trace & 0.0158 & 0.5128 & 0.7908 & 0.0775 \\
\rowcolor{tblLavA}
MNLI  & cv        & 0.0165 & 0.5197 & 0.7873 & 0.0783 \\
\rowcolor{tblLavB}
MNLI  & norm-cv   & 0.0158 & 0.5128 & 0.7908 & 0.0775 \\
\midrule
\rowcolor{tblLavSection}
\multicolumn{6}{l}{\textbf{SQuAD}} \\
\rowcolor{tblLavA}
SQuAD & mean-std  & 0.3377 & 1.3866 & 0.5065 & 0.4940 \\
\rowcolor{tblLavB}
SQuAD & cov-trace & 0.3377 & 1.3866 & 0.5065 & 0.4940 \\
\rowcolor{tblLavA}
SQuAD & cv        & 0.3339 & 1.3639 & 0.5040 & 0.4913 \\
\rowcolor{tblLavB}
SQuAD & norm-cv   & 0.3377 & 1.3866 & 0.5065 & 0.4940 \\
\bottomrule
\end{tabular}
\caption{\textbf{Sensitivity to the token-level uncertainty definition.}
We replace the default dimension-averaged standard deviation definition of $U(x_j)$ with alternative dispersion summaries, including \texttt{cov-trace} (total variance via $\mathrm{tr}(\mathrm{Cov})$) and scale-normalized coefficient-of-variation variants (\texttt{cv}, \texttt{norm-cv}), and reevaluate MNLI and SQuAD.
Performance remains largely stable across variants in terms of ECE, NLL, accuracy, and AURC, and the overall conclusions are unchanged.
This suggests that UAT-LITE is not dependent on a single specific aggregation of embedding-level stochasticity.}
\label{tab:variants}
\end{table}

\subsection{Negation Diagnostics and Qualitative Attention Illustration}
\label{app:negation}

We include a controlled negation perturbation as a \emph{diagnostic} of uncertainty sensitivity under
semantic destabilization rather than as a correctness test.
Because negation can legitimately change the true label, label-dependent quantities such as AccNeg and
$\Delta$Err, both computed against the \emph{original} labels, are only partially informative.
We therefore emphasize \emph{label-free stability signals} that reflect whether the model expresses increased
uncertainty under the perturbation: the change in confidence ($\Delta$Conf), the change in the margin between the
top-1 and top-2 predicted probabilities ($\Delta$Margin), and the change in predictive entropy ($\Delta$Entropy),
where larger entropy and smaller margin indicate greater uncertainty.

This diagnostic tests whether uncertainty estimates respond \emph{directionally} to a controlled semantic change,
rather than whether the model is invariant to arbitrary linguistic rephrasings.
Accordingly, we do not claim that calibration behavior under negation generalizes to all paraphrases or stylistic variants.
Instead, the template below provides a standardized perturbation that enables consistent comparison across methods under the same intervention.

\begin{table}[t]
\centering
\setlength{\tabcolsep}{4.6pt}
\renewcommand{\arraystretch}{1.10}
\small

\resizebox{\columnwidth}{!}{%
\begin{tabular}{lcccccc}
\toprule
\rowcolor{tblNegHead}
\textbf{Method} & \textbf{AccOrig} & \textbf{AccNeg} &
$\boldsymbol{\Delta}$\textbf{Conf} &
$\boldsymbol{\Delta}$\textbf{Margin} &
$\boldsymbol{\Delta}$\textbf{Entropy} &
$\boldsymbol{\Delta}$\textbf{Err} \\
\midrule

\rowcolor{tblNegBase}
Base (deterministic)
& 0.705 & 0.495 & $-0.0831$ & $-0.1422$ & $+0.1524$ & $+0.2100$ \\

\rowcolor{tblNegUAT}
UAT-LITE (MC; Q-only default)
& 0.690 & 0.495 & $-0.0796$ & $-0.1316$ & $+0.1387$ & $+0.1950$ \\

\rowcolor{tblNegTS}
UAT-LITE + TS (MC-mean logits)
& 0.690 & 0.495 & $-0.0766$ & $-0.1224$ & $+0.1172$ & $+0.1950$ \\
\bottomrule
\end{tabular}%
}

\textcolor{blue}{
\caption{\textbf{Negation diagnostic on MNLI} ($N=200$, seed=42; template: ``It is not true that \{\emph{H}\}'').
$\Delta$ values are computed as (negated $-$ original).
Because negation may change the true label, AccNeg and $\Delta$Err, both measured against the original labels, should be interpreted cautiously.
We therefore also report label-free stability signals, namely the change in confidence, margin (top-1 minus top-2 probability), and predictive entropy.
Across methods, negation consistently lowers confidence and margin while increasing entropy, indicating greater predictive uncertainty under the perturbation.
The UAT-LITE variants exhibit slightly smaller magnitude shifts in these signals than the deterministic baseline, suggesting somewhat more controlled confidence responses under semantic destabilization.}
}
\label{tab:negation}
\end{table}

Across methods, the negation perturbation lowers confidence and margin while increasing entropy, indicating that
it induces greater predictive uncertainty.
Because the perturbation may change the underlying label, these results are best interpreted as an
\emph{uncertainty-responsiveness} check, that is, a test of directional sensitivity to semantic destabilization, rather than as a measure of
robust accuracy under label-preserving transformations.

To aid interpretability, Fig.~\ref{fig:uncertainty_attention} visualizes a representative example of
uncertainty-weighted attention under negation.
The goal is to illustrate how uncertainty-guided gating can induce small, structured redistributions
in internal information flow.
We do not treat attention as a faithful explanation mechanism, and this evidence is not used as quantitative evidence.
Because attention weights are normalized by softmax, absolute maps can appear visually similar;
we therefore additionally show a difference heatmap to highlight directional shifts.

\begin{figure*}[t]
    \centering
    \includegraphics[width=0.9\textwidth]{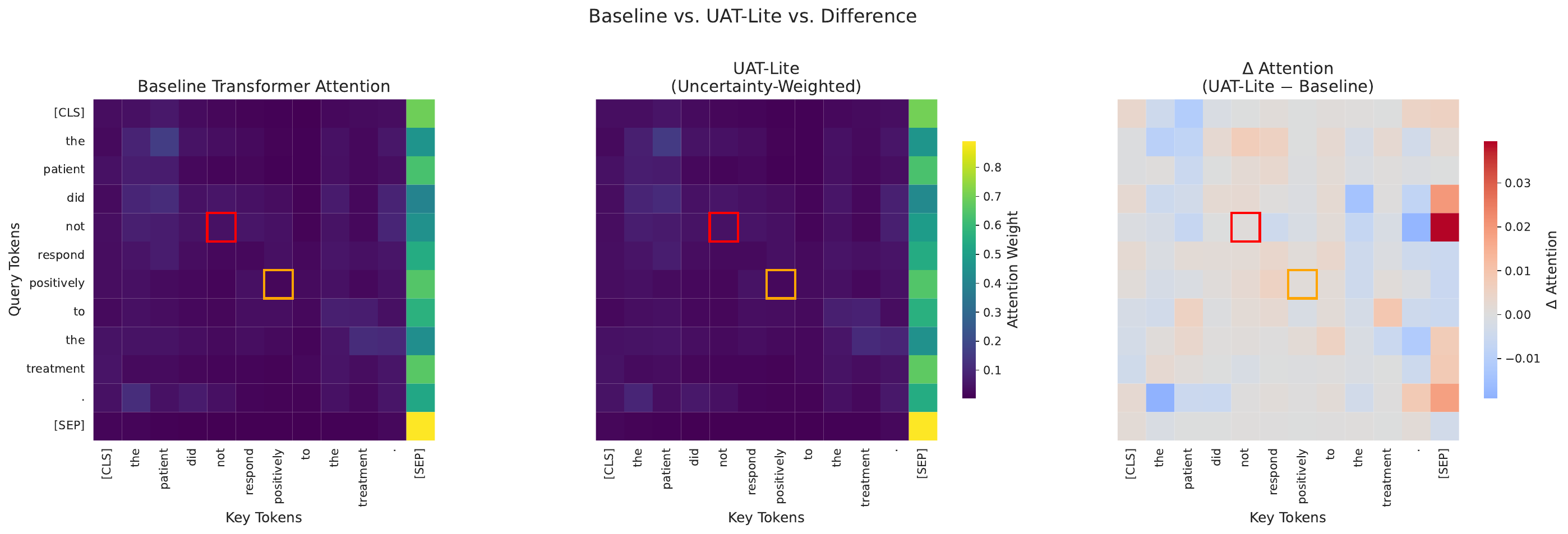}
    \caption{
    \textbf{Uncertainty-aware attention under negation (illustrative).}
    Attention heatmaps for a standard transformer (left),
    \ours (middle), and their difference (right) on the sentence
    ``The patient did not respond positively to the treatment.''
    While the absolute attention maps appear similar because of softmax normalization, the difference map highlights small but structured redistributions, with attention shifting away from sentiment-bearing tokens and toward negation-sensitive predicate structure in this example.
    This visualization is qualitative and shown for a representative run
    (single seed); it is intended to support interpretability rather than
    provide a corpus-level statistic.
    }
    \label{fig:uncertainty_attention}
\end{figure*}


\subsection{Behavioral analysis by linguistic bucket (MedQA)}
\label{app:behavior_medqa}

To examine \emph{when} uncertainty-weighted attention affects reliability, we stratify MedQA items into
coarse linguistic buckets using simple cues, including negation markers, numeric content, biomedical terms, and hedging markers.
Table~\ref{tab:behavior_medqa} reports calibration and predictive performance within each bucket.

Two patterns are consistent across buckets.
First, \textbf{UAT-LITE alone tends to improve accuracy} relative to Base+TS
(e.g., 0.2476$\rightarrow$0.2614 on negation, 0.2418$\rightarrow$0.2514 on numeric, 0.2369$\rightarrow$0.2578 on biomedical, and 0.2344$\rightarrow$0.2583 on hedges),
but it does \textbf{not} improve in-domain calibration relative to TS, which remains the stronger post-hoc calibrator.
Second, \textbf{UAT-LITE+TS yields the lowest ECE in every bucket}
(0.0110 vs.\ 0.0201 on negation; 0.0010 vs.\ 0.0257 on numeric; 0.0075 vs.\ 0.0302 on biomedical; 0.0080 vs.\ 0.0331 on hedges),
while preserving the accuracy gains introduced by UAT-LITE.
This pattern supports a complementary interpretation:
uncertainty-weighted attention improves internal evidence aggregation, whereas TS further corrects output-level confidence.

Although coverage at the 0.9 threshold is zero for all methods in these buckets, ECE, AURC, and accuracy still reveal meaningful behavioral differences.
Overall, this analysis suggests that the benefit of UAT-LITE is not confined to a single linguistic phenomenon, but appears across several medically relevant sources of ambiguity and variation.

\begin{table}[t]
\centering
\small
\setlength{\tabcolsep}{4pt}

\definecolor{tblBehHeader}{RGB}{233,199,188}
\definecolor{tblBehBucket}{RGB}{245,226,220}
\definecolor{tblBehA}{RGB}{252,246,244}
\definecolor{tblBehB}{RGB}{248,238,234}

\begin{tabular}{l r l c c c c}
\toprule
\rowcolor{tblBehHeader}
Bucket & N & Method & ECE $\downarrow$ & Cov@0.9 $\uparrow$ & AURC $\downarrow$ & Acc $\uparrow$ \\
\midrule

\rowcolor{tblBehBucket}
\textit{all} & 1273 & Base+TS & 0.0248 & 0.0000 & 0.7261 & 0.2427 \\
\rowcolor{tblBehA}
&  & UAT-LITE & 0.0809 & 0.0000 & 0.7478 & 0.2498 \\
\rowcolor{tblBehB}
&  & UAT-LITE+TS & 0.0006 & 0.0000 & 0.7493 & 0.2498 \\
\cmidrule{1-7}

\rowcolor{tblBehBucket}
\textit{negation} & 723 & Base+TS & 0.0201 & 0.0000 & 0.7213 & 0.2476 \\
\rowcolor{tblBehA}
&  & UAT-LITE & 0.0677 & 0.0000 & 0.7358 & 0.2614 \\
\rowcolor{tblBehB}
&  & UAT-LITE+TS & 0.0110 & 0.0000 & 0.7362 & 0.2614 \\
\cmidrule{1-7}

\rowcolor{tblBehBucket}
\textit{numeric} & 1245 & Base+TS & 0.0257 & 0.0000 & 0.7231 & 0.2418 \\
\rowcolor{tblBehA}
&  & UAT-LITE & 0.0793 & 0.0000 & 0.7441 & 0.2514 \\
\rowcolor{tblBehB}
&  & UAT-LITE+TS & 0.0010 & 0.0000 & 0.7455 & 0.2514 \\
\cmidrule{1-7}

\rowcolor{tblBehBucket}
\textit{biomedical} & 574 & Base+TS & 0.0302 & 0.0000 & 0.7259 & 0.2369 \\
\rowcolor{tblBehA}
&  & UAT-LITE & 0.0716 & 0.0000 & 0.7371 & 0.2578 \\
\rowcolor{tblBehB}
&  & UAT-LITE+TS & 0.0075 & 0.0000 & 0.7357 & 0.2578 \\
\cmidrule{1-7}

\rowcolor{tblBehBucket}
\textit{hedges} & 751 & Base+TS & 0.0331 & 0.0000 & 0.7471 & 0.2344 \\
\rowcolor{tblBehA}
&  & UAT-LITE & 0.0702 & 0.0000 & 0.7351 & 0.2583 \\
\rowcolor{tblBehB}
&  & UAT-LITE+TS & 0.0080 & 0.0000 & 0.7362 & 0.2583 \\
\bottomrule
\end{tabular}

\caption{\textbf{Behavioral analysis on MedQA by linguistic bucket.}
Items are stratified using lightweight heuristic cues for negation, numeric content, biomedical terminology, and hedging.
Across all buckets, UAT-LITE improves accuracy relative to Base+TS, whereas UAT-LITE+TS achieves the lowest ECE in every case.
The largest calibration gains from stacking with TS appear in the numeric and hedge buckets, while the strongest accuracy gains from UAT-LITE appear in the biomedical and hedge buckets.
Although Cov@0.9 is zero throughout, ECE, AURC, and accuracy still show consistent differences across methods, supporting the view that uncertainty-aware attention and TS play complementary roles in this setting.}
\label{tab:behavior_medqa}
\end{table}

\subsection{Sensitivity Analysis of $\lambda$ and $M$}
\label{app:sensitivity}

We examine the sensitivity of \ours to two inference-time hyperparameters:
the uncertainty weighting coefficient $\lambda$ and the Monte Carlo budget $M$.
We evaluate a representative grid with $\lambda \in \{0.1, 0.5, 1.0\}$ and
$M \in \{3, 5, 10\}$ on SQuAD~2.0, which is particularly sensitive to epistemic
uncertainty due to the presence of both answerable and unanswerable questions.
Similar stability trends were observed on MNLI and SST-2 and are omitted for brevity.
We report ECE and accuracy to assess calibration quality and task performance; accuracy is included
to verify that calibration changes are not driven by degraded performance.

Table~\ref{tab:sensitivity} shows that calibration remains stable across the grid:
ECE lies in a narrow interval $[0.0640,\,0.0703]$ (mean $=0.0670$, std $=0.0022$).
No sharp sensitivity to $\lambda$ is observed; across $\lambda$ blocks, ECE varies by at most $\approx 0.006$.
Within each $\lambda$, increasing $M$ tends to reduce ECE relative to the $M{=}3$ reference
($\Delta$ECE is consistently non-positive), but gains diminish beyond $M{=}5$:
moving from $M{=}3 \rightarrow 5$ yields most of the improvement at a moderate cost increase
(MC Cost $\approx 1.67\times$), while $M{=}10$ provides smaller additional benefits despite
a $3.33\times$ cost. Accuracy is similarly robust: changes within each $\lambda$ block are small
($\Delta$Acc $\le 0.0037$ absolute), indicating that uncertainty-weighted attention primarily
modulates calibration rather than task accuracy.
Overall, the method is insensitive to precise tuning of $\lambda$ and achieves stable calibration
over a broad range of inference-time settings, supporting practicality when extensive
hyperparameter search is undesirable.


\definecolor{sensHead}{RGB}{235,235,235}   
\definecolor{sensSep}{RGB}{245,245,245}    
\definecolor{sensLamA}{RGB}{255,247,230}   
\definecolor{sensLamB}{RGB}{233,245,255}   
\definecolor{sensLamC}{RGB}{246,236,255}   

\begin{table}[t]
\centering
\small
\setlength{\tabcolsep}{4.5pt}
\renewcommand{\arraystretch}{1.10}

\begin{adjustbox}{max width=\columnwidth,center}
\begin{tabular}{cc cc cc c}
\toprule
\rowcolor{sensHead}
$\boldsymbol{\lambda}$ & $\boldsymbol{M}$ &
\textbf{ECE$\downarrow$} & \textbf{Acc$\uparrow$} &
$\boldsymbol{\Delta}$\textbf{ECE} & $\boldsymbol{\Delta}$\textbf{Acc} &
\textbf{MC Cost$\times$} \\
\rowcolor{sensHead}
& &
& &
\footnotesize{(vs.\ $M{=}3$)} & \footnotesize{(vs.\ $M{=}3$)} &
\footnotesize{(vs.\ $M{=}3$)} \\
\midrule

\rowcolor{sensSep}
\multicolumn{7}{l}{\textbf{$\lambda=0.1$}} \\
\rowcolor{sensLamA}
0.1 & 3  & 0.0699 & 0.7075 & 0.0000 & 0.0000 & 1.00 \\
\rowcolor{sensLamA}
0.1 & 5  & 0.0676 & 0.7075 & $-0.0023$ & $+0.0000$ & 1.67 \\
\rowcolor{sensLamA}
0.1 & 10 & 0.0649 & 0.7109 & $-0.0050$ & $+0.0034$ & 3.33 \\
\midrule

\rowcolor{sensSep}
\multicolumn{7}{l}{\textbf{$\lambda=0.5$}} \\
\rowcolor{sensLamB}
0.5 & 3  & 0.0703 & 0.7059 & 0.0000 & 0.0000 & 1.00 \\
\rowcolor{sensLamB}
0.5 & 5  & 0.0657 & 0.7096 & $-0.0046$ & $+0.0037$ & 1.67 \\
\rowcolor{sensLamB}
0.5 & 10 & 0.0682 & 0.7068 & $-0.0021$ & $+0.0009$ & 3.33 \\
\midrule

\rowcolor{sensSep}
\multicolumn{7}{l}{\textbf{$\lambda=1.0$}} \\
\rowcolor{sensLamC}
1.0 & 3  & 0.0669 & 0.7090 & 0.0000 & 0.0000 & 1.00 \\
\rowcolor{sensLamC}
1.0 & 5  & 0.0655 & 0.7102 & $-0.0014$ & $+0.0012$ & 1.67 \\
\rowcolor{sensLamC}
1.0 & 10 & \textbf{0.0640} & \textbf{0.7110} & $-0.0029$ & $+0.0020$ & 3.33 \\
\bottomrule
\end{tabular}
\end{adjustbox}

\vspace{0.6em}
\begin{minipage}{0.96\columnwidth}
\footnotesize
\textbf{Summary.} ECE mean $=0.0670$, std $=0.0022$; min $=0.0640$, max $=0.0703$ (range $=0.0063$).
$\Delta$ECE/$\Delta$Acc are computed within each $\lambda$ block relative to $M{=}3$; MC Cost scales linearly with $M$.
\end{minipage}

\caption{Sensitivity analysis of \ours with respect to uncertainty penalty $\lambda$ and Monte Carlo budget $M$ on SQuAD~2.0 (single seed).
Calibration remains stable across a wide range of inference-time settings (ECE range $<0.007$).
Increasing $M$ yields diminishing returns beyond $M=5$, while accuracy changes remain small, indicating limited sensitivity to precise hyperparameter tuning.}
\label{tab:sensitivity}
\vspace{-2mm}
\end{table}


\subsection{Character-Level Adversarial Robustness}
\label{app:appendix_adversarial}

Table~\ref{tab:adversarial} evaluates robustness under character-level perturbations
on SST-2.
We apply random 5\% character swaps to simulate typographical noise and simple adversarial corruption.
Both the baseline BERT model and \ours exhibit substantial accuracy degradation
under attack, with accuracy dropping from 0.920 to 0.816 for the baseline and from 0.922 to 0.812 for \ours.
Calibration also deteriorates markedly, with ECE increasing
from 0.062 to 0.133 for the baseline and from 0.055 to 0.132 for \ours.

Although \ours achieves slightly better clean accuracy (0.922 vs.\ 0.920)
and clean calibration (ECE 0.055 vs.\ 0.062), its robustness under attack remains broadly comparable to that of the baseline.
These results suggest that uncertainty-aware attention improves calibration
under clean conditions but provides limited additional robustness to character-level perturbations.
This pattern is consistent with prior work showing that calibration-oriented methods primarily address
uncertainty estimation and do not, by themselves, constitute adversarial defenses
\cite{goodfellow2014explaining}.
An important direction for future work is to combine \ours with dedicated adversarial-defense strategies,
such as adversarial training or certified robustness methods.

\begin{table*}[t]
\centering
\small
\renewcommand{\arraystretch}{1.15}
\setlength{\tabcolsep}{6pt}

\definecolor{tblAdvHeader}{RGB}{224,179,179}
\definecolor{tblAdvA}{RGB}{250,241,241}
\definecolor{tblAdvB}{RGB}{245,231,231}

\begin{tabular}{l l c c c c}
\toprule
\rowcolor{tblAdvHeader}
\textbf{Attack} & \textbf{Method} & \textbf{Clean Acc} & \textbf{Attack Acc} & \textbf{Clean ECE} & \textbf{Attack ECE} \\
\midrule
\rowcolor{tblAdvA}
Character Swap & Baseline & 0.920 & 0.816 & 0.062 & 0.133 \\
\rowcolor{tblAdvB}
Character Swap & \ours & 0.922 & 0.812 & 0.055 & 0.132 \\
\bottomrule
\end{tabular}

\caption{\textbf{Character-level adversarial robustness on SST-2} (single seed).
We evaluate 500 validation examples under 5\% random character swaps using a fine-tuned BERT-base model.
Both the baseline and \ours suffer substantial degradation in accuracy and calibration under perturbation.
Although \ours shows slightly better clean accuracy and clean ECE, attack-time accuracy and ECE are nearly unchanged relative to the baseline.
These results indicate that the calibration benefits of uncertainty-aware attention on clean data do not directly translate into stronger robustness against character-level perturbations.}
\label{tab:adversarial}
\end{table*}

\subsection{Full Linguistic Uncertainty Analysis}
\label{app:full_linguistic_uncertainty}

Table~\ref{tab:linguistic} provides a fine-grained analysis of how
\ours responds to diverse sources of uncertainty spanning
lexical, syntactic, epistemic, and logical phenomena.
For \textbf{negation}, \ours remains uniformly confident across affirmative,
negated, and clinical-negation cases, yielding near-zero estimated uncertainty.
This behavior indicates invariance to explicit negation cues, which is desirable
for clear clinical negation (e.g., ``not stable'') but also suggests limited
sensitivity to polarity reversal in more general contexts.
This pattern is consistent with the attention analysis in
\cref{fig:uncertainty_attention}, which shows that uncertainty-aware
attention redistributes weight conservatively toward negation and predicate
structure without producing elevated output-level uncertainty for clear
clinical negation.

For \textbf{lexical ambiguity}, the behavior is mixed.
When contextual disambiguation is available (``bank account''),
\ours shows a meaningful increase in uncertainty.
However, for genuinely ambiguous expressions (``bank closed''), the model remains highly
confident, indicating residual overconfidence under unresolved polysemy.
A similar intermediate pattern appears for polysemous triggers such as ``bat,''
where uncertainty increases but does not fully reflect the underlying ambiguity.

A related pattern appears for \textbf{epistemic hedges}.
Strong assertions (``definitely'') and weaker hedges (``probably,'' ``possibly'')
are treated with near-maximal confidence, indicating limited sensitivity to
graded epistemic marking.
By contrast, explicit modal uncertainty (``might'') leads to a measurable
increase in uncertainty, suggesting that the model responds more strongly to
direct lexical uncertainty markers than to subtler hedging cues.

For \textbf{syntactic complexity}, \ours shows clearer sensitivity.
Medium-depth structures produce the largest increase in uncertainty, consistent
with greater structural processing difficulty.
Deeper structures, however, elicit a non-monotonic response, suggesting that the
model may partially adapt to highly regular but more complex constructions
rather than increasing uncertainty monotonically with depth.
Finally, under \textbf{logical contradictions}, \ours appropriately increases
uncertainty for inconsistent inputs (``healthy $\wedge$ dying'') while maintaining high confidence for coherent statements.
This pattern indicates sensitivity to semantic inconsistency and is consistent
with the framework’s uncertainty propagation across layers.


\definecolor{lingHead}{RGB}{242,242,242}   
\definecolor{lingSection}{RGB}{242,242,242} 

\definecolor{lingGood}{RGB}{255,241,230}   
\definecolor{lingWarn}{RGB}{255,232,214}   
\definecolor{lingBad}{RGB}{255,220,220}    

\begin{table*}[t]
\centering
\scriptsize
\setlength{\tabcolsep}{6pt}
\renewcommand{\arraystretch}{1.15}

\begin{tabular}{l l c c c l}
\toprule
\rowcolor{lingHead}
\textbf{Phenomenon} &
\textbf{Trigger} &
\textbf{Base Conf} &
\textbf{MB Conf} &
\textbf{Unc} &
\textbf{Diagnostic} \\
\midrule

\rowcolor{lingSection}
\multicolumn{6}{l}{\textbf{Negation}} \\
\rowcolor{lingGood}
Positive      & excellent             & 0.53 & 1.00 & \textit{0.00} & Negation-invariant \\
\rowcolor{lingGood}
Negated       & not excellent         & 0.52 & 1.00 & \textit{0.00} & Negation-invariant \\
\rowcolor{lingGood}
Clinical      & not stable            & 0.55 & 1.00 & \textit{0.00} & Confident clinical negation \\

\midrule
\rowcolor{lingSection}
\multicolumn{6}{l}{\textbf{Lexical Ambiguity}} \\
\rowcolor{lingWarn}
Clear context & bank account          & 0.51 & 0.70 & \textbf{0.30} & Appropriate ambiguity \\
\rowcolor{lingBad}
Ambiguous     & bank closed           & 0.55 & 0.99 & \textit{0.01} & Overconfident ambiguity \\
\rowcolor{lingWarn}
Polysemous    & bat (animal/tool)     & 0.54 & 0.83 & 0.17          & Partial polysemy sensitivity \\

\midrule
\rowcolor{lingSection}
\multicolumn{6}{l}{\textbf{Epistemic Hedges}} \\
\rowcolor{lingGood}
Definite      & definitely            & 0.51 & 1.00 & \textit{0.00} & Strong assertion \\
\rowcolor{lingBad}
Probable      & probably              & 0.53 & 0.99 & \textit{0.01} & Hedge-insensitive \\
\rowcolor{lingBad}
Possible      & possibly              & 0.52 & 0.99 & \textit{0.01} & Hedge-insensitive \\
\rowcolor{lingWarn}
Uncertain     & might                 & 0.51 & 0.85 & 0.15          & Uncertainty-aware \\

\midrule
\rowcolor{lingSection}
\multicolumn{6}{l}{\textbf{Syntactic Complexity}} \\
\rowcolor{lingGood}
Simple        & depth=2               & 0.55 & 0.82 & 0.18          & Low structural load \\
\rowcolor{lingWarn}
Medium        & depth=5               & 0.53 & 0.54 & \textbf{0.46} & Complexity-sensitive \\
\rowcolor{lingWarn}
Complex       & depth=8               & 0.53 & 0.83 & 0.17          & Non-monotonic response \\

\midrule
\rowcolor{lingSection}
\multicolumn{6}{l}{\textbf{Contradictions}} \\
\rowcolor{lingGood}
Consistent    & healthy, stable        & 0.52 & 0.99 & \textit{0.01} & Coherent input \\
\rowcolor{lingWarn}
Contradictory & healthy $\wedge$ dying & 0.53 & 0.90 & 0.10          & Contradiction-aware \\

\bottomrule
\end{tabular}

\caption{\textbf{Fine-grained linguistic uncertainty analysis across diverse phenomena} (single seed).
The table reports baseline confidence, \ours confidence (MB), derived epistemic uncertainty, and a qualitative diagnostic interpretation for representative triggers spanning negation, lexical ambiguity, epistemic hedging, syntactic complexity, and contradiction.
Several patterns are notable.
Negation and explicit clinical negation produce near-zero uncertainty, indicating stable behavior in these cases.
By contrast, unresolved lexical ambiguity and weaker hedges often remain associated with high confidence, revealing residual overconfidence.
The strongest uncertainty increases appear for medium syntactic complexity and logical contradiction, suggesting greater sensitivity to structural difficulty and semantic inconsistency than to all forms of lexical or epistemic ambiguity.
Overall, the table highlights both the strengths and the limitations of uncertainty-weighted attention at a fine-grained linguistic level.}
\label{tab:linguistic}
\vspace{-2mm}
\end{table*}

\section{Cross-Model \& Scaling Analysis}
\label{app:appendix_cross_model}

\textbf{Scope clarification.}
All models evaluated in this section are encoder-only,
BERT-derived transformer architectures.
Generalization to fundamentally different architectures, such as decoder-only
models (e.g., GPT or LLaMA) or state-space models, is outside the scope of the
current study and is left for future work.

\subsection{Generalization Across BERT Backbones}
\label{app:appendix_cross_domain}

Table~\ref{tab:cross_model_summary} evaluates whether the calibration benefits
of \ours generalize across \emph{BERT-based} transformer backbones pretrained
on heterogeneous corpora and applied to domain-specific benchmarks.
The evaluated models span general-purpose (BERT-base, RoBERTa-base),
clinical (ClinicalBERT), biomedical (BioBERT), and scientific (SciBERT)
representations, thereby testing robustness across substantially different
linguistic and semantic domains.

Across the evaluated settings, \ours reduces ECE for most pretrained
backbones, yielding an average relative improvement of approximately \textbf{32\%}
(Table~\ref{tab:cross_model_summary}).
The largest gain is observed for BERT-base on SQuAD, where
ECE decreases by \textbf{52\%} (0.130 $\rightarrow$ 0.062).
Because SQuAD contains both answerable and unanswerable questions, it places
substantial demands on ambiguity recognition and confidence estimation.
The large improvement in this setting suggests that uncertainty-weighted
attention helps the model better calibrate predictions when evidence
aggregation is uncertain.

Domain-specialized models also benefit, although to different degrees.
BioBERT on PubMedQA shows a \textbf{40\%} reduction in ECE, suggesting that
uncertainty-aware attention is useful for evidence-dense biomedical text.
ClinicalBERT on MedQA shows a \textbf{25\%} improvement, indicating that the
method remains effective in fact-oriented medical question answering, although
the gain is smaller than in some other settings.
By contrast, RoBERTa-base on MNLI shows only a marginal \textbf{2\%}
improvement.
MNLI is a high-resource benchmark with strong pretrained representations and
a relatively well-calibrated baseline, leaving less room for further gains.
SciBERT on SQuAD similarly shows a modest \textbf{10\%} reduction, suggesting
that pretraining on specialized scientific corpora may already capture some
task-relevant uncertainty structure.

Taken together, these results indicate that the calibration benefits of \ours
transfer across multiple BERT-derived backbones and application domains, with
the largest gains arising in settings characterized by greater ambiguity or
less favorable baseline calibration.


\definecolor{cmHead}{RGB}{242,242,242}      
\definecolor{cmSection}{RGB}{242,242,242}   

\definecolor{cmRowA}{RGB}{236,248,247}      
\definecolor{cmRowB}{RGB}{246,252,252}      
\definecolor{cmAvg}{RGB}{223,242,239}       
\definecolor{cmImp}{RGB}{230,245,255}       

\begin{table*}[t]
\centering
\small
\setlength{\tabcolsep}{6pt}
\renewcommand{\arraystretch}{1.15}

\begin{tabular}{l l l c c c}
\toprule
\rowcolor{cmHead}
\textbf{Base Model} & \textbf{Domain} & \textbf{Dataset} &
\textbf{Baseline ECE} & \textbf{+\ours ECE} & \textbf{Improvement} \\
\midrule

\rowcolor{cmRowA}
BERT-base      & General      & SQuAD     & 0.130 & 0.062 & \cellcolor{cmImp}52\% $\downarrow$ \\
\rowcolor{cmRowB}
RoBERTa-base   & General      & MNLI      & 0.040 & 0.039 & \cellcolor{cmImp}2\% $\downarrow$ \\
\rowcolor{cmRowA}
ClinicalBERT   & Medical      & MedQA     & 0.049 & 0.037 & \cellcolor{cmImp}25\% $\downarrow$ \\
\rowcolor{cmRowB}
BioBERT        & Biomedical   & PubMedQA  & 0.120 & 0.072 & \cellcolor{cmImp}40\% $\downarrow$ \\
\rowcolor{cmRowA}
SciBERT        & Scientific   & SQuAD     & 0.092 & 0.083 & \cellcolor{cmImp}10\% $\downarrow$ \\

\midrule
\rowcolor{cmAvg}
\textbf{Average} & — & — & \textbf{0.086} & \textbf{0.059} & \cellcolor{cmImp}\textbf{32\% $\downarrow$} \\
\bottomrule
\end{tabular}

\caption{\textbf{Cross-domain calibration across pretrained BERT-derived backbones} (single seed).
Expected Calibration Error (ECE; lower is better) is reported before and after applying \ours, with relative improvement computed against each model’s own baseline.
The evaluated models span general-purpose, clinical, biomedical, and scientific pretraining regimes, allowing comparison across substantially different domains and benchmark types.
Improvements are observed for most backbones, with the largest gains on BERT-base for SQuAD and BioBERT for PubMedQA, while RoBERTa-base on MNLI shows only a marginal change.
Overall, the table suggests that the calibration benefits of \ours are not confined to a single pretrained model or domain, although the magnitude of improvement varies with the benchmark and the strength of the starting baseline.}
\label{tab:cross_model_summary}
\vspace{-2mm}
\end{table*}

\subsection{Effect of Model Capacity on Calibration}
\label{app:appendix_bert_scaling}

Table~\ref{tab:bert_scaling} analyzes the effect of model capacity on calibration
improvement by applying \ours across BERT models ranging from BERT-tiny (4.4M parameters)
to BERT-large (335.1M parameters).
Unlike Table~\ref{tab:cross_model_summary}, which varies domains and pretraining corpora,
this analysis holds the architecture family fixed and isolates the role of
parameter scale.

Small and mid-sized models benefit most from uncertainty-weighted attention.
BERT-mini achieves the largest relative improvement, with a
\textbf{58\%} reduction in ECE, followed by BERT-base with an approximately
\textbf{53\%} reduction.
These models appear to have sufficient representational capacity to express
epistemic uncertainty while still benefiting from explicit uncertainty
modulation within attention.

BERT-tiny and BERT-small show more modest improvements (12\% and 15\%,
respectively).
For very small models, limited representational capacity may constrain the
extent to which uncertainty-aware inference can reshape internal attention
patterns, yielding smaller but still consistent gains.
In contrast, BERT-large shows degraded calibration, with ECE increasing by
\textbf{13\%} after applying \ours.
One possible explanation is that, in highly overparameterized models,
stochastic perturbations introduced through Monte Carlo sampling may disrupt
already stable attention patterns, leading to excess variance propagation
rather than improved uncertainty estimation.

Latency measurements further contextualize these results.
Although inference-time overhead remains moderate for mid-sized models
(e.g., 7 ms $\rightarrow$ 39 ms for BERT-base), it increases substantially for larger
architectures (26 ms $\rightarrow$ 134 ms for BERT-large).
Overall, this analysis suggests that uncertainty-weighted attention is most
effective in the small-to-mid-sized regime, where model capacity and stochasticity
appear to be better balanced.


\definecolor{scHead}{RGB}{242,242,242}   
\definecolor{scRowA}{RGB}{246,242,252}   
\definecolor{scRowB}{RGB}{252,250,255}   
\definecolor{scWarn}{RGB}{255,238,238}   
\definecolor{scDelta}{RGB}{237,246,255}  
\definecolor{scLat}{RGB}{245,255,245}    

\begin{table*}[t]
\centering
\small
\setlength{\tabcolsep}{5pt}
\renewcommand{\arraystretch}{1.15}

\begin{tabular}{l c c c c c}
\toprule
\rowcolor{scHead}
\textbf{Model} & \textbf{Params} &
\textbf{Baseline ECE} & \textbf{\ours ECE} &
\textbf{Rel. $\Delta$} & \textbf{Latency} \\
\midrule

\rowcolor{scRowA}
BERT-tiny  & 4.4M   & 0.070 & 0.061 & \cellcolor{scDelta}12\% $\downarrow$ & \cellcolor{scLat}1 $\rightarrow$ 5 \\
\rowcolor{scRowB}
BERT-mini  & 11.2M  & 0.131 & 0.056 & \cellcolor{scDelta}58\% $\downarrow$ & \cellcolor{scLat}1 $\rightarrow$ 8 \\
\rowcolor{scRowA}
BERT-small & 28.8M  & 0.095 & 0.081 & \cellcolor{scDelta}15\% $\downarrow$ & \cellcolor{scLat}1 $\rightarrow$ 9 \\
\rowcolor{scRowB}
BERT-base  & 109.5M & 0.139 & 0.066 & \cellcolor{scDelta}53\% $\downarrow$ & \cellcolor{scLat}7 $\rightarrow$ 39 \\

\rowcolor{scWarn}
BERT-large & 335.1M & 0.152 & 0.172 & \cellcolor{scDelta}13\% $\uparrow$ & \cellcolor{scLat}26 $\rightarrow$ 134 \\

\bottomrule
\end{tabular}

\caption{\textbf{Effect of model capacity on calibration within the BERT family} (single seed).
Expected Calibration Error (ECE; lower is better) is reported before and after applying \ours across models ranging from BERT-tiny to BERT-large.
Relative change is computed with respect to each model’s baseline ECE, and latency reports baseline-to-\ours inference time in milliseconds.
The largest improvements occur for BERT-mini and BERT-base, whereas BERT-tiny and BERT-small show smaller but still positive gains.
In contrast, BERT-large exhibits a degradation in calibration together with the highest inference overhead.
These results suggest that the effectiveness of uncertainty-weighted attention depends in part on model scale, with the most favorable trade-off appearing in the small-to-mid-sized regime.}
\label{tab:bert_scaling}
\vspace{-2mm}
\end{table*}


\section{Implementation and Reproducibility}
\label{app:complexity-analysis}

This section characterizes the computational and implementation properties of
\ours.
We first analyze its asymptotic and practical computational complexity relative to
standard transformer inference and ensemble-based uncertainty methods.
We then quantify the inference-time overhead introduced by Monte Carlo sampling and
summarize the experimental setup, hardware configuration, and reproducibility details
required to replicate the reported results.
Throughout, we emphasize that \ours operates strictly at
\emph{inference time} and does not alter pretrained parameters or training cost.

\subsection{Computational Complexity Analysis}
\label{subsec:complexity}

\ours preserves the parameter count and training cost of the base transformer.
All uncertainty-aware mechanisms operate strictly at inference time and introduce
no additional trainable parameters or architectural modifications.

The additional cost arises solely from Monte Carlo (MC) sampling during inference:
\begin{itemize}
    \item \textbf{Parameters:} identical to the base model (e.g., 109M for BERT-base)
    \item \textbf{Training cost:} unchanged
    \item \textbf{Calibration cost:} one scalar temperature per task (optional)
    \item \textbf{Inference cost:} $M$ stochastic forward passes ($M\in\{5,10\}$)
\end{itemize}

Inference-time complexity therefore scales linearly with the MC budget $M$.
Table~\ref{tab:efficiency} reports a canonical end-to-end GPU benchmark at $M{=}10$
under a sequential MC evaluation loop.
Relative to deterministic inference, \ours incurs approximately a $23\times$
slowdown, whereas temperature scaling adds negligible overhead because it applies
only a scalar rescaling to the logits.
Unlike Deep Ensembles, which require training, storing, and evaluating multiple
independent models, \ours introduces no additional training or model-storage cost,
making it a lighter-weight alternative when retraining is undesirable.


\definecolor{effHead}{RGB}{242,242,242}    
\definecolor{effRowA}{RGB}{248,249,250}    
\definecolor{effRowB}{RGB}{255,255,255}    
\definecolor{effHeavy}{RGB}{255,236,236}   
\definecolor{effRatio}{RGB}{232,243,255}   

\begin{table}[t]
\centering
\setlength{\tabcolsep}{5pt}
\renewcommand{\arraystretch}{1.12}
\footnotesize

\begin{tabular}{lccc}
\toprule
\rowcolor{effHead}
\textbf{Method} & \textbf{Latency (ms)} & $\pm$ & \textbf{Ratio} \\
\midrule

\rowcolor{effRowA}
Base (det.)        & 62.93   & 0.29  & \cellcolor{effRatio}1.00$\times$ \\

\rowcolor{effRowB}
Base + TS          & 63.13   & 0.27  & \cellcolor{effRatio}1.00$\times$ \\

\rowcolor{effHeavy}
\ours\ (M=10)      & 1426.90 & 78.87 & \cellcolor{effRatio}22.68$\times$ \\

\rowcolor{effHeavy}
\ours\ + TS (M=10) & 1486.61 & 7.26  & \cellcolor{effRatio}23.62$\times$ \\

\bottomrule
\end{tabular}

\caption{\textbf{Canonical GPU inference latency at $M{=}10$.}
Latency is measured on a single NVIDIA A100 GPU using batch size $32$, sequence length $128$, and a sequential Monte Carlo evaluation loop.
Values report end-to-end forward-pass wall-clock time per batch (mean $\pm$ standard deviation), together with the ratio relative to deterministic inference under the same setting.
As expected, \ours and \ours+TS incur substantial overhead because they require repeated stochastic forward passes, whereas Base+TS is effectively identical to the deterministic baseline.
The small difference between \ours and \ours+TS confirms that the additional cost is dominated by MC sampling rather than by temperature scaling itself.}
\label{tab:efficiency}
\vspace{-4pt}
\end{table}

\subsection{Inference-Time Overhead}
\label{app:inference-overhead}

While the previous subsection characterizes asymptotic complexity, we now report
empirical inference-time overhead under realistic deployment conditions.
At test time, \ours performs $M$ stochastic forward passes through
the same pretrained model.
When executed sequentially, runtime increases approximately linearly with $M$.
Observed slowdowns depend on the Monte Carlo budget $M$ and the latency measurement
protocol, including batching, sequence length, and whether MC passes are executed
sequentially or vectorized.
Throughout this work, we report latency under the canonical benchmark
described in \ref{app:hardware}, under which overhead scales approximately linearly
with $M$ for sequential MC inference.

Despite this overhead, inference-time cost remains practical in settings where
reliability and calibrated confidence are prioritized over raw throughput.
Importantly, this cost is incurred only when uncertainty-aware inference is enabled and
can in principle be reduced by lowering $M$ in settings where a smaller stochastic budget is acceptable, as discussed in Section~\ref{sec:discussion}.

\subsection{Reproducible Experimental Setup}
\label{app:appendix_repro}

This subsection summarizes the experimental setup and implementation details necessary
to reproduce our results.
All experiments follow a consistent architecture, training configuration, and
evaluation protocol across datasets.

\paragraph{Model architecture.}
\begin{itemize}
    \item Base model: \texttt{BERT-base-uncased}
    \item Number of parameters: 110M
    \item Transformer layers: 12
    \item Hidden dimension: 768
    \item Attention heads: 12
\end{itemize}

\paragraph{Training hyperparameters.}
\begin{itemize}
    \item Optimizer: AdamW with standard settings
    \item Learning rate: $2 \times 10^{-5}$
    \item Learning rate schedule: linear warmup for the first 500 steps, followed by linear decay
    \item Batch size: 16
    \item Number of epochs: 3
    \item Weight decay: 0.01
    \item Gradient clipping: 1.0
\end{itemize}

\paragraph{\ours configuration.}
\begin{itemize}
    \item Inference via Monte Carlo sampling with $M \in \{5,10\}$ stochastic forward passes
    (default $M{=}10$ for the main revision tables; $M{=}5$ is additionally reported where noted)
    \item Component-specific dropout rates:
    \begin{itemize}
        \item Embedding layer: 0.1
        \item Attention layers: 0.2
        \item Feed-forward layers: 0.3
    \end{itemize}
    \item Uncertainty weighting coefficient: $\lambda = 0.5$ (set once on development data and held fixed across tasks)
    \item Sensitivity analysis for $\lambda$ and $M$ reported in \ref{app:sensitivity}
\end{itemize}

\subsection{Hardware, Latency, and Code Availability}
\label{app:hardware}

\paragraph{Compute resources and latency measurement.}
All experiments are conducted on a single NVIDIA A100 GPU (40GB).
Unless otherwise stated, latency is measured as \emph{end-to-end forward-pass
wall-clock time}, averaged over 200 runs after 50 warm-up iterations.
We report a single canonical latency benchmark throughout the paper:
batch size $32$, sequence length $128$, a sequential MC loop without vectorization across
MC samples, and a single GPU.
Latency is reported per batch and expressed as a ratio relative to deterministic
inference under the same conditions.
For stochastic methods, latency is measured using the same Monte Carlo budget $M$
used during evaluation.
Reported slowdowns therefore reflect the ratio between stochastic and deterministic
inference under identical measurement settings.

\paragraph{Code availability.}
To support reproducibility during anonymous review, we provide an anonymized repository containing the complete implementation, including training scripts, evaluation pipelines, and configuration files.
All experiments are implemented using HuggingFace Transformers, PyTorch, and Python.
The repository is available at:
\url{https://github.com/eliashossain001/uq_decomposition/tree/main}.


\section{Supplementary Related Work}
\label{app:related}

This section provides an extended discussion of prior work on calibration,
uncertainty estimation, and uncertainty-aware Transformer modeling.
Because the main paper focuses on method design and empirical evaluation rather
than a comprehensive survey, we defer detailed comparisons to this supplementary
section.
We organize related work into three themes:
(i) post-hoc calibration and ensemble-based uncertainty estimation,
(ii) Bayesian Transformers and stochastic inference methods, and
(iii) approaches that explicitly incorporate uncertainty into attention mechanisms.
This structure clarifies how \ours differs from and complements
existing techniques.

\subsection{Calibration and Uncertainty}

Despite high predictive accuracy, modern neural networks are often poorly calibrated
\citep{guo2017calibration}. Post-hoc calibration methods such as temperature scaling
\citep{guo2017calibration}, Platt scaling \citep{platt1999probabilistic}, and histogram
binning \citep{zadrozny2001obtaining} adjust output confidence without altering internal
representations. While simple and effective, these methods are limited by the quality
of learned features and do not affect model reasoning.

Ensemble-based approaches improve calibration by averaging predictions from multiple
models. Deep Ensembles \citep{lakshminarayanan2017simple} and snapshot ensembles
\citep{huang2017snapshot} are among the strongest calibration baselines, but require
training, storing, and evaluating multiple model replicas, resulting in substantial
computational and memory overhead. More efficient approximations, such as ensembling
pruned attention heads within a single Transformer
\citep{gabetni2025ensemblingprunedattentionheads}, reduce cost but still rely on
architectural modification or fused ensemble structures.
Our work instead targets competitive calibration using inference-time stochasticity
within a single pretrained model.

\subsection{Bayesian and Stochastic Methods}

Bayesian neural networks provide principled uncertainty estimates by placing
distributions over model parameters
\citep{mackay1992practical, neal2012bayesian}.
However, scalable approximations such as variational inference
\citep{blundell2015weight, louizos2017multiplicative} and Laplace methods
\citep{ritter2018scalable} scale poorly to Transformer-sized models.
Bayesian Transformer variants
\citep{xue2021bayesian, fan2020bayesian} further require architectural modification
and retraining, limiting compatibility with pretrained backbones.

Several recent methods estimate uncertainty in weight space.
Stochastic Weight Averaging and its Gaussian extension (SWAG) approximate posterior
distributions during fine-tuning and improve uncertainty quality in NLI tasks
\citep{talman2023uncertainty}. Parameter-efficient Bayesian adaptation methods such as
C-LoRA \citep{rahmati2025c} introduce latent variables within low-rank adapters, but
require Bayesian fine-tuning and task-specific parameter updates. Other approaches
improve uncertainty estimation through post-hoc output heads
\citep{zabolotnyi2025adue} or uncertainty-aware loss reweighting during alignment
\citep{wang2024uncertainty}. These methods primarily influence training dynamics or
output distributions and do not modify internal Transformer computations at inference
time.

Monte Carlo Dropout \citep{gal2016dropout} provides a lightweight alternative by
approximating Bayesian inference through stochastic forward passes. Standard MC
Dropout applies uniform stochasticity across layers and treats uncertainty only as an
output-level signal. Our work builds on MC Dropout, but explicitly uses the resulting
uncertainty to influence attention during inference.

\subsection{Uncertainty-Aware Attention}

Calibration and uncertainty issues are particularly pronounced in Transformer-based
models. Larger Transformers are often increasingly overconfident
\citep{desai2020calibration}, and semantic uncertainty plays a significant role in
language understanding and generation \citep{kuhn2023semantic}. Most existing methods
treat uncertainty as a post-hoc quantity, leaving attention patterns and internal
representations unchanged.

Several approaches inject uncertainty directly into attention mechanisms.
Bayesian Attention Belief Networks \citep{zhang2021bayesian} model attention weights as
latent random variables, while Sparse Gaussian Process Attention
\citep{chen2023calibrating} reformulates self-attention using kernel-based Gaussian
process inference. Although principled, these methods require architectural changes
and variational training, limiting their applicability to pretrained models.
Domain-specific uncertainty-aware Transformers, such as TUnA
\citep{10.1093/bib/bbae359}, introduce uncertainty primarily through output layers
rather than through attention modulation itself.

Recent work explores efficient uncertainty mechanisms inspired by ensembles or
parameter-efficient adaptation.
Hydra Ensembles \citep{gabetni2025ensemblingprunedattentionheads} approximate Deep
Ensembles by fusing multiple attention heads, achieving competitive calibration with
reduced cost. However, uncertainty arises from architectural fusion and remains an
output-level aggregation. Similarly, C-LoRA \citep{rahmati2025c} requires Bayesian
fine-tuning to model uncertainty. Other methods leverage uncertainty for downstream
decision-making, such as selective prediction \citep{vazhentsev2025uncertainty}, but
do not affect internal attention computation.

\paragraph{Positioning of \ours.}
\ours integrates epistemic uncertainty into self-attention at
\emph{inference time}, without modifying pretrained weights, training objectives,
or model architecture.
Unlike post-hoc scalers (e.g., temperature scaling) that adjust confidence only at
the output layer, our method uses uncertainty estimates to \emph{modulate internal
token interactions}, aiming to improve reliability under ambiguity, distribution
shift, and selective decision-making.
Importantly, \ours is \emph{complementary} to post-hoc scaling:
temperature scaling can be applied on top of our uncertainty-aware inference when
the goal is to minimize in-domain calibration error while preserving uncertainty-aware
internal computation.

\section{AI Usage Statement}
\label{app:ai-usage}

We used large language models as auxiliary tools during manuscript preparation. Claude was used for limited coding assistance, including refactoring and debugging support. ChatGPT was used for writing assistance, including grammar checking, spelling correction, and stylistic refinement. All core research contributions, including the scientific ideas, experimental design, implementation, empirical evaluation, analysis, and conclusions, were developed by the authors. All results are based on our own experiments using publicly available datasets and verified code. No AI system was used to generate data, conduct experiments, or draw scientific conclusions.

\end{document}